\def\aspeed{\bar{v}_A}
\newtheorem{theorem}{Theorem}
\newtheorem{remark}{Remark}
\newtheorem{lemma}{Lemma}
\newtheorem{definition}{Definition}
\def\bs{\boldsymbol}
\def\mf{\mathbf}
\def\mc{\mathcal}
\def\beq{\begin{equation*}}
\def\eeq{\end{equation*}}
\def\bql{\begin{equation}}
\def\eql{\end{equation}}
\def\bqn{\begin{eqnarray*}}
\def\eqn{\end{eqnarray*}}
\def\bnl{\begin{eqnarray}}
\def\enl{\end{eqnarray}}
\def\bna{\bql\begin{array}{rcl}}
\def\ena{\end{array}\eql}
\def\bnn{\beq\begin{array}{rcl}}
\def\enn{\end{array}\eeq}
\def\bma{\begin{bmatrix}}
\def\ema{\end{bmatrix}}
\def\bmx{\begin{matrix}}
\def\emx{\end{matrix}}
\def\ben{\begin{enumerate}}
\def\een{\end{enumerate}}
\def\bit{\begin{itemize}}
\def\eit{\end{itemize}}
\def\bei{\begin{itemize}}
\def\eei{\end{itemize}}
\def\bet{\begin{tabular}}
\def\eet{\end{tabular}}
\newcommand{\allcaps}[1]{\uppercase\expandafter{#1}}
\def\bfs{\begin{footnotesize}}
\def\efs{\end{footnotesize}}
\def\bss{\begin{small}}
\def\ess{\end{small}}
\def\ds{\displaystyle}
\def\acontrol{\gamma_A}
\def\dcontrol{\omega_D}
\def\ta{\tau_A(\za,\Gamma)}
\def\td{\tau_D(\zd,\Omega)}
\def\taa{\tau_A(\za,\zb)}
\def\tdd{\tau_D(\zd,\zb)}
\def\tatheta{\tau_A^\theta(\za)}
\def\tdtheta{\tau_D^\theta(\zd)}
\def\sp{\vspace{3mm}}
\def\zd{\mf z_D}
\def\za{\mf z_A}
\def\zb{\mf z_B}
\def\pp{p(\zd,\za,\zb)}
\def\p{p(\zd,\za,\Omega,\Gamma)}
\def\ptheta{p^\theta(\zd,\za,\theta)}
\def\pstar{p^*(\zd,\za)}
\def\pdstar{p(\zd,\za,\Omega^*,\Gamma)}
\def\pastar{p(\zd,\za,\Omega,\Gamma^*)}
\def\pdastar{p(\zd,\za,\Omega^*,\Gamma^*)}
\def\pthetastar{p^{\theta^*}(\zd,\za,\theta^*)}
\def\dbest{\Omega^*}
\def\abest{\Gamma^*}
\title{\LARGE \bf
Perimeter-defense Game between Aerial Defender and Ground Intruder
}
\author{Elijah S. Lee, Daigo Shishika and Vijay Kumar 
\thanks{We gratefully acknowledge the support of ARL grant DCIST CRA W911NF-17-2-0181}
\thanks{The authors are with the GRASP Lab at the University of Pennsylvania, Philadelphia, PA, 19104 USA 
        {\tt\small \{elslee, shishika, kumar\}@seas.upenn.edu}}%
}
\begin{document}

\maketitle
\thispagestyle{empty}
\pagestyle{empty}

\begin{abstract}
We study a variant of pursuit-evasion game in the context of perimeter defense. In this problem, the intruder aims to reach the base plane of a hemisphere without being captured by the defender, while the defender tries to capture the intruder. The perimeter-defense game was previously studied under the assumption that the defender moves on a circle. We extend the problem to the case where the defender moves on a hemisphere. 
To solve this problem, we analyze the strategies based on the breaching point at which the intruder tries to reach the target and predict the goal position, defined as optimal breaching point, that is achieved by the optimal strategies on both players. We provide the barrier that divides the state space into defender-winning and intruder-winning regions and prove that the optimal strategies for both players are to move towards the optimal breaching point. Simulation results are presented to demonstrate that the optimality of the game is given as a Nash equilibrium.
\end{abstract}

\section{Introduction}
The study of pursuit-evasion games (PEGs) has received interest over the past years and has played a crucial role in many different areas including missile guidance and robotics. There are many variants of PEGs under different assumptions on the players and the environments, and the surveys of the research are provided in \cite{robin2016multi, chung2011search}. 

Many researchers have focused on solving the PEGs on the planar environments where every player has motion in two dimensions \cite{liang2019differential,Zhou2016,oyler2016pursuit}. One work \cite{liang2019differential} presents a PEG with three players, target, attacker, and defender in a plane. The attacker’s goal is to capture the target without being caught by the defender, and the defender aims to defend the target while trying to capture the attacker. Zhou et al. \cite{Zhou2016} proposes cooperative pursuit of a single evader by multiple pursuers and considers the Voronoi neighbors of each player in simply connected plane. Other work \cite{oyler2016pursuit} studies PEGs in the presence of obstacles that constrain the two dimensional motions of the players.  

This work formulates a variant of pursuit-evasion game known as the target-guarding problem \cite{isaacs1999differential}. In this problem, the intruder aims to reach the target without being captured by the defender, while the defender tries to capture the intruder \cite{liang2019differential,shishika2018local,shishika2019perimeter, shishika2020cooperative}. When the defender is constrained to move along the perimeter of the target region, we call this problem as perimeter-defense game \cite{shishika2018local,shishika2019perimeter,shishika2020cooperative}. Multiplayer perimeter-defense game \cite{shishika2018local,shishika2020cooperative} and perimeter of arbitrary convex shapes \cite{shishika2019perimeter} have been studied in the past.  

All the aforementioned works deal with engagements on a planar game space. In real-world situations, target/perimeter may be close to three-dimensional shape. Accordingly, the players may be given the ability to perform three-dimensional motion, which is preferred to provide efficient and practical trajectories for such problems. As an instance, aerial vehicles \cite{lee2020experimental,nguyen2019mavnet,chen2020sloam,lee2016drone}
are viable solution to navigate through three dimension space. Many studies have focused on deploying aerial vehicles in various space such as nuclear power plant \cite{lee2020experimental}, penstock \cite{nguyen2019mavnet}, forest \cite{chen2020sloam}, or disaster sites\cite{lee2016drone}. These are good examples for perimeter-defense application and a three-dimensional target opens up the feasibility of PEGs on the real-world settings.

\begin{figure}[t]
\centering
\includegraphics[width=.49\textwidth]
{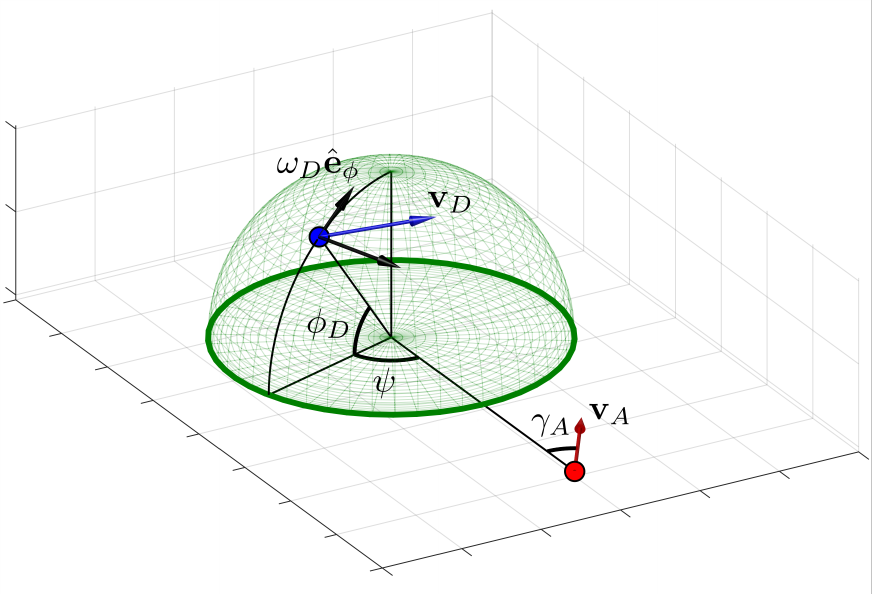}
\caption{
The coordinate system and relevant variables in the one vs one game.
}
\label{fig:hemisphere_coordinates}
\end{figure}

This paper considers three-dimensional extension of the perimeter-defense game. As an intermediate step towards a full air vs. air perimeter defense, we consider a game played between aerial defender and ground intruder. To the best of authors’ knowledge, this paper is the first to solve the pursuit-evasion game with the defender constrained to move on a hemisphere. Relevant related works by Yan et al. \cite{yan2019guarding,yan2019construction,yan2019maximum} allow the players to move in three dimensions. Yan et al. proposes  the target of a plane and solves the differential games with two defenders and one intruder \cite{yan2019guarding}, three defenders and one intruder with equal speeds \cite{yan2019construction}, and heterogeneous multiplayer \cite{yan2019maximum} in three dimensions. These solutions are reasonable for open space; however, our work aims to provide a practical solution considering real-world scenes that aerial defender has a constrained movement around the target (i.e. it cannot directly pass through the target) to capture ground intruder.

The contributions of the paper are 
(i) providing the barrier that characterizes the outcome of the game from the initial configuration; and
(ii) obtaining the players’ optimal strategies.

Section II formulates the perimeter-defense problem on a hemisphere. Section III presents candidate strategies, which is proved to be optimal in Section IV. Section V provides simulation results, and Section VI concludes the paper.
\section{Problem Formulation \label{sec:problemformulation}}
Consider two agents $A$ and $D$ denoting the intruder and the defender.
The perimeter defended by the defenders is defined as a hemisphere with unit radius.
The intruder is constrained to move on the ground plane $\mc R$, whereas the defender is constrained to move on the hemisphere.
The positions of the agents are described using spherical coordinates: $\mf z_D=[\psi_D,\phi_D,1]$ and $\mf z_A=[\psi_A,0,r]$, where $\psi$ and $\phi$ are the azimuth and elevation angles.
The relative position of the two can be described by the following states: $\mf z \triangleq [\psi,\phi,r]$, where $\psi\triangleq \psi_A-\psi_D$ and $\phi\triangleq \phi_D$ (see Fig.~\ref{fig:hemisphere_coordinates}).


We assume that all agents have first-order dynamics.
We parameterize the intruder's velocity using the heading angle $\gamma_A$ (see Fig.~\ref{fig:hemisphere_coordinates}),
where we assume $\gamma_A\in[0,\pi/2]$.
We also assume without the loss of generality that the defender's maximum speed is 1.
The intruder is assumed to have a maximum speed $\nu\leq 1$.
The defender's velocity is parameterized by the altitudinal component: $\omega_D \triangleq \dot{\phi}_D \in [-1,1]$.
Noting that the defender's speed is given by
\bql
\|\dot{\mf x}_D\|=\sqrt{\dot{\phi}_D^2+\dot{\psi}_D^2\cos^2\phi_D},
\eql
and assuming that the defender moves at its maximum (unit) speed, we have 
\bql
\dot{\psi}_D = \frac{\sqrt{1-\omega_D^2}}{\cos\phi_D}.
\eql

The state dynamics are
\begin{equation}
\dot{\mf z}=\left[ \begin{array}{c}

\dot \psi\\
\dot \phi\\
\dot r
\end{array} \right]
=
\left[ \begin{array}{c}

\ds \frac{\aspeed \sin\acontrol}{r} -\frac{\sqrt{1-\dcontrol^2}}{\cos\phi_D}\\
\dcontrol \\
-\aspeed \cos \acontrol
\end{array} \right]
=
\bs f(\mf z,\omega_D,\gamma_A).
\end{equation}
Finally, we assume complete state information, i.e., all states (positions) are known to all agents, but not the control inputs (velocities).


The game ends at time $t_f$ with intruder's win if $r(t_f)=1$ and $|\psi(t_f)| + |\phi_D(t_f)|>0$, whereas it ends with defender's win if $\phi_D(t_f)=\psi(t_f)=0$ and $r(t_f)>0$. 
We call $t_f$ as the \textit{terminal time}.
Note if the states reach the configuration $\{\mf z\,|\,\psi=\phi_D=0\}$, the defender can stabilize the states around this manifold due to its speed advantage \cite{shishika2018local}. This implies that intruder cannot reach the hemisphere without being captured by the defender.

The above defines a \textit{Game of Kind} as the question of whether intruder can reach the perimeter with non-zero terminal separation angle or the defender can drive $\psi$ and $\phi_D$ to $0$ before the intruder reaches the perimeter.
In the following sections, the surface (i.e., \emph{barrier}) separating these two cases is derived.
\section{Candidate Strategies} \label{sec:candidate}
This section discusses candidate strategies for defender and intruder. We first propose a payoff function to be used in the game of degree. Then, we derive an optimal direction of motion that maximizes the payoff function using geometric approach. 
The optimality of the candidate strategies are discussed in Sec.~\ref{sec:optimality}.

\begin{figure}[t]
\centering
\includegraphics[width=.42\textwidth]
{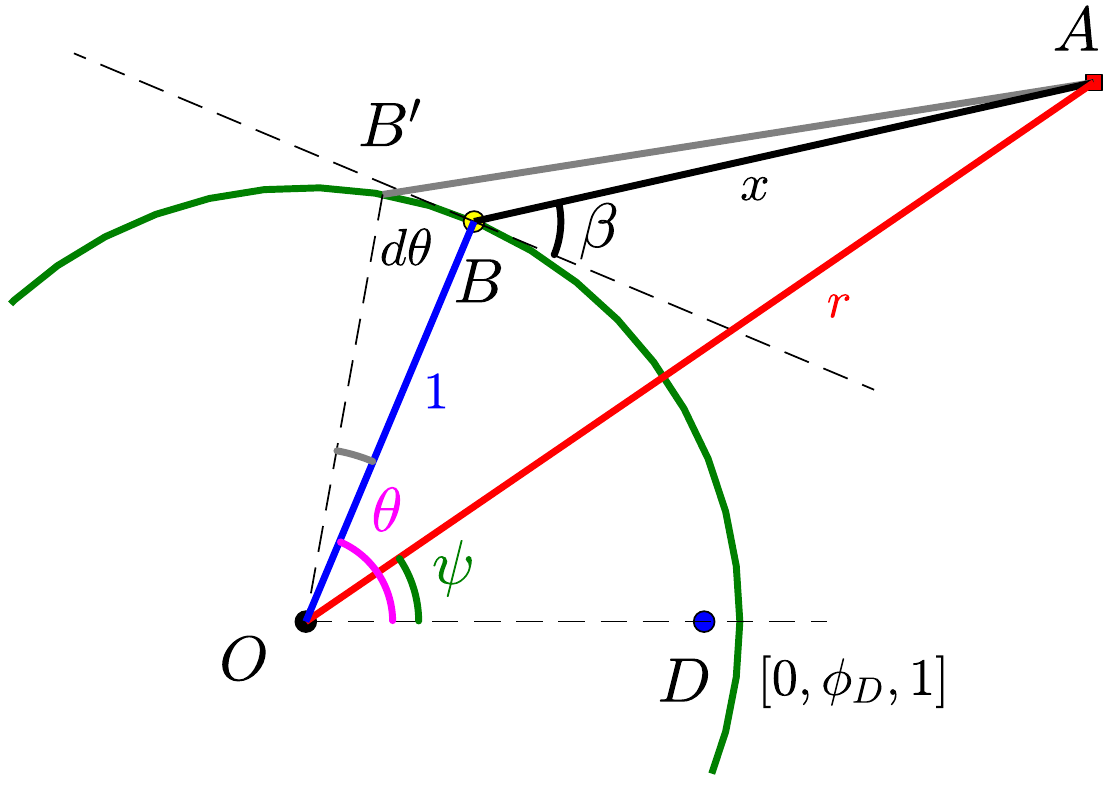}
\caption{Geometric interpretation of parameters.}
\label{fig:geometric_interpretation}
\end{figure}

\subsection{Objective function}
Given an initial configuration $\zd$ and $\za$, the goal of the intruder is to reach the perimeter. Assume the intruder reaches it at point $B$ on the ground plane of the hemisphere (See Fig. \ref{fig:geometric_interpretation}). We call $B$ as the \textit{breaching point}, $\theta\triangleq\psi_B-\psi_D$ as the \textit{breaching angle}, and $x=\lVert \mf z_A-\mf z_B \rVert$. Define the \textit{target time} as the time to go to $B$ and call $\tdd$ as the \textit{defender target time} and $\taa$ as the \textit{intruder target time}. Then, we consider the following \textit{payoff} function:
\bql
\pp = \tdd -\taa \label{eq:payoff}
\eql   

Notice the positive $p$ indicates that the intruder reaches the breaching point before the defender does and negative $p$ means vice versa. Thus, the defender intends to minimize $p$ while the intruder tries to maximize it. Now, we express the payoff $p$ in terms of control variables. Given the current states, let $\Omega$ and $\Gamma$ be the continuous control inputs of $\omega_D$ and $\gamma_A$ that lead to the breaching point. Then, \eqref{eq:payoff} becomes
\bql
\p = \td - \ta
\eql

If we call $\Omega^*$ and $\Gamma^*$ as the control inputs/strategies that minimizes $\td$ and $\ta$, respectively, the optimality in the game is given as a Nash equilibrium:
\bql
\pdstar\leq\pdastar\leq\pastar \label{eq:nash}
\eql
where the optimal payoff is given by
\beq
\pstar=\pdastar
\eeq

The defender cannot reduce $p$ by changing the strategy from $\Omega^*$, as long as the intruder sticks to its strategy $\Gamma^*$. Similarly, the intruder cannot achieve a higher $p$ by deviating from $\Gamma^*$ if the defender sticks to its strategy $\Omega^*$. 

\subsection{Candidate strategy}

Given a breaching point $B$, a candidate strategy for defender and intruder is to move towards $B$ in the shortest path. In this strategy, $\tau_D$ is the time for the defender to travel the geodesic between defender's initial position and the breaching point, which is given by
\bql
\tdd = \cos^{-1}{(\cos{\phi_D}\cos{\theta})} \label{eq:taud}
\eql
$\tau_A$ is the time for the intruder to move in a straight line towards the breaching point. Using the law of cosines from $\triangle ABO$ in Fig.~\ref{fig:geometric_interpretation}, we have
\bql
x^2 = r^2+1^2-2r\cos{(\theta-\psi)} \label{eq:x2}
\eql
which can be used with $\taa = x/\nu$ to give
\bql
\taa = \frac{1}{\nu}\sqrt{r^2+1-2r\cos{(\theta-\psi)}} \label{eq:taua}
\eql
From \eqref{eq:payoff}, \eqref{eq:taud} and \eqref{eq:taua}, we have
\bnl
\pp &=& \cos^{-1}{(\cos{\phi_D}\cos{\theta})} \\
&-&\frac{\sqrt{r^2+1-2r\cos{(\theta-\psi)}}}{\nu} \label{eq:p}
\enl

We will argue that this is the optimal payoff in Section IV.
Looking at the parameters, we observe that only $\theta$ needs to be found since all other parameters are described in the initial setup: $\mf z_D=[0,\phi_D,1]$, $\mf z_A=[\psi,0,r]$, and $\bar v_A = \nu$. 
Therefore, we define
\bql
\pp \triangleq \ptheta \triangleq \tdtheta-\tatheta \label{eq:z2theta}
\eql
such that
\bql
\tdtheta\triangleq\tdd \text{ and } \tatheta\triangleq\taa \notag
\eql
This means that a breaching angle determines a breaching point so the target time and the payoff can depend on $\theta$ instead of $\zb$. We delve into solving for $\theta$ in the next.


\subsection{Optimal breaching point}

To help solve for $\theta$, we define $\beta$ to be the angle between $(\mf z_A-\mf z_B)$ and the tangent line at the breaching point $B$ as shown in Fig.~\ref{fig:geometric_interpretation}. We call $\beta$ as the \textit{approach angle} as it determines the direction of intruder approaching to the perimeter. 

\begin{lemma} 
Suppose the positions of defender and intruder are given as $\mf z_D$ and $\mf z_A$, respectively. Then the function that maps from $\theta$ to $\beta$ is one-to-one.
\end{lemma}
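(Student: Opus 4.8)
The plan is to fix the two positions $\zd$ and $\za$ — so that $\psi$, $\phi_D$ and $r$ are constants — and to show that $\beta$ is a \emph{strictly monotone} function of $\theta$; injectivity then follows at once. Since $\psi$ is held fixed, it is convenient to work with the central angle $\delta \triangleq \theta - \psi = \psi_B - \psi_A$, which is an affine (hence invertible) reparametrization of $\theta$, so it suffices to prove that $\beta$ is strictly monotone in $\delta$.

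First I would obtain an explicit expression for $\beta$. Working in the ground plane, consider the triangle $\triangle ABO$ with $|OA| = r$, $|OB| = 1$, and central angle $\angle AOB = \delta$. Because the tangent line at $B$ is perpendicular to the radius $OB$, the approach angle $\beta$ is the complement of the angle that $(\za - \zb)$ makes with the radial direction at $B$. Resolving $(\za-\zb)$ into its radial and tangential components at $B$ gives
\[
\sin\beta = \frac{r\cos\delta - 1}{x}, \qquad \cos\beta = \frac{r\sin\delta}{x}, \qquad x = \sqrt{r^2 + 1 - 2r\cos\delta},
\]
which is consistent with \eqref{eq:x2} and satisfies $\sin^2\beta + \cos^2\beta = 1$ identically.

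Next I would differentiate. Using $x\,dx = r\sin\delta\,d\delta$ (obtained from \eqref{eq:x2}) to differentiate $\sin\beta = (r\cos\delta-1)/x$, and then dividing by $\cos\beta = r\sin\delta/x$ (legitimate on $\delta\in(0,\pi)$), the expression collapses — after the identity $x^2 + r\cos\delta - 1 = r(r-\cos\delta)$ — to the clean form
\[
\frac{d\beta}{d\theta} = \frac{d\beta}{d\delta} = -\,\frac{r\,(r - \cos\delta)}{x^2}.
\]
Because the intruder lies outside the perimeter, $r \ge 1 \ge \cos\delta$, so $r - \cos\delta > 0$ on $\delta\in(0,\pi)$ and the derivative is strictly negative. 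Hence $\beta$ is strictly decreasing in $\theta$ and the map $\theta\mapsto\beta$ is one-to-one.

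I expect the main obstacle to be the geometric bookkeeping rather than the calculus: correctly relating $\beta$ to the triangle $\triangle ABO$ (confirming the tangent/radius perpendicularity and extracting the right radial–tangential decomposition) so that $\beta$ is pinned down as a single-valued, smooth branch — only then does the sign of $d\beta/d\theta$ translate into monotonicity of $\beta$ itself rather than merely of some trigonometric surrogate. One must also handle the degenerate endpoints $\delta\in\{0,\pi\}$, where $\cos\beta = 0$ and the division above is invalid; there I would invoke continuity of $\beta$ (the formulas for $\sin\beta$ and $\cos\beta$ remain continuous) to extend strict monotonicity on the open interval to injectivity on the full admissible range. Everything else is routine simplification.
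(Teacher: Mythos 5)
Your proof is correct, and it follows the same overall strategy as the paper -- establish that $\beta$ is strictly monotone in $\theta$ -- but by a genuinely different route. The paper restricts $\theta$ to the range $[\psi,\theta_t]$ up to the tangency point and argues synthetically: perturbing the breaching point to $B'$ and chasing angles in the triangle $ABB'$ (essentially an exterior-angle inequality) to conclude ``it is easy to observe'' that $\beta$ decreases. You instead derive the closed form $\sin\beta = (r\cos\delta-1)/x$, $\cos\beta = r\sin\delta/x$ from the radial--tangential decomposition at $B$ (which is consistent with the paper's \eqref{eq:r2}, since that relation gives $\sin\beta = (r^2-x^2-1)/(2x)$, the same expression after substituting \eqref{eq:x2}), and then compute the explicit derivative $d\beta/d\delta = -r(r-\cos\delta)/x^2 < 0$. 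What your version buys is rigor and verifiability: the paper's angle-chasing step is asserted rather than proved, whereas your derivative is a checkable identity whose sign is manifest from $r>1$, and it yields strict monotonicity on all of $\delta\in(0,\pi)$ rather than only up to the tangency angle (with the caveat you already note, that beyond tangency your $\sin\beta$ changes sign, so $\beta$ there is a signed extension of the ``angle to the tangent line''). Your handling of the endpoint $\delta=0$, where $\cos\beta=0$ forbids the division, by continuity of the explicit formulas, closes the one gap the calculation would otherwise leave. No errors; this is a strictly more careful argument than the one in the paper.
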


\begin{proof}
By symmetry, consider breaching points $B$ at coordinate $\mf z_B=[\theta,0,1]$ so that $\theta$'s codomain is from $\psi$ to $\theta_t$, where $\theta_t$ is the angle describing a breaching point $B$ that is the point of tangency of $\overline{AB}$. It is easy to observe that as $\theta$ increases from $\psi$ to $\theta_t$, $\beta$ monotonically decreases. In Fig.~\ref{fig:geometric_interpretation}, consider that $\theta$ is increased by $d\theta$. If we call $B'$ to be the new breaching point with $d\theta$, we know $\angle AB'B<\beta$ and thus new $\beta'$ satisfies $\beta'<\beta$.  
\end{proof}

\begin{remark} 
Notice that given $\theta$ and positions of agents, $\beta$ is unique. This guarantees that for the optimal payoff, there exists a unique breaching point associated with $\theta^*$ and $\beta^*$.
\end{remark}

\begin{figure*}[!t]
\centering
\includegraphics[height=3.7cm]{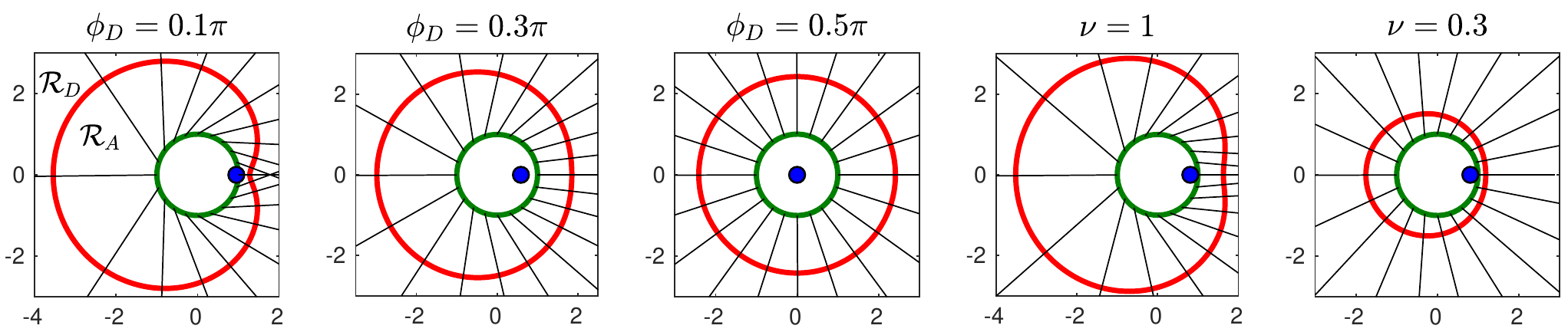}
\caption{
Instances of winning region with varying parameters. In the figure, perimeter, defender, and winning region are marked in green circle, blue dot, and red curve, respectively. Normal lines to the winning region are marked in black to help visualize possible optimal approach angles of intruder.
}
\label{fig:winning_region}
\end{figure*}

Now we continue by forming two equations relating $\theta^*$ and $\beta^*$. By Lemma 1, constructing two independent equations will solve for these parameters. The first equation comes from the optimality of the game. 

\begin{theorem}
For a given intruder and defender positions, the payoff function \eqref{eq:payoff} is maximized if the intruder selects the breaching point $\zb$ that gives the following approach angle:
\begin{equation}
\beta^* =  \cos^{-1}\left(\nu\frac{\cos{\phi_D}\sin{\theta^*}}{\sqrt{1-\cos^2{\phi_D}\cos^2{\theta^*}}}\right)
\label{eq:beta}
\end{equation}
\end{theorem}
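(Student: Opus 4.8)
The plan is to reduce Theorem~1 to a one-variable optimization of the payoff $\ptheta$ established in \eqref{eq:z2theta}, and then to reinterpret the resulting stationarity condition geometrically as a statement about the approach angle $\beta$. First I would substitute the closed forms \eqref{eq:taud} and \eqref{eq:taua} to write the payoff explicitly as a function of the single free variable $\theta$,
\[
\ptheta = \cos^{-1}(\cos\phi_D\cos\theta) - \frac{1}{\nu}\sqrt{r^2+1-2r\cos(\theta-\psi)},
\]
and impose the first-order condition $d\ptheta/d\theta = 0$. Differentiating the defender term yields $\cos\phi_D\sin\theta / \sqrt{1-\cos^2\phi_D\cos^2\theta}$, and differentiating the intruder term yields $(1/\nu)\, r\sin(\theta-\psi)/x$, where $x$ is the chord length in \eqref{eq:x2}. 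Equating the two derivatives gives the stationarity equation that implicitly characterizes the optimal breaching angle $\theta^*$.

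The second step --- and the one I expect to be the crux --- is the geometric bridge that identifies the intruder-side derivative factor $r\sin(\theta-\psi)/x$ with $\cos\beta$. Because both $A$ and $B$ lie in the ground plane, I would carry out this argument planarly in triangle $ABO$ of Fig.~\ref{fig:geometric_interpretation}, where the central angle is $\angle AOB = \theta-\psi$, the side opposite it is $x$, and the side $|OA|=r$ is opposite the angle $\angle ABO$. The law of sines then gives $\sin(\angle ABO) = r\sin(\theta-\psi)/x$. Since the perimeter is a unit circle whose tangent at $B$ is perpendicular to the radius $OB$, the approach angle $\beta$ between the chord $BA$ and that tangent satisfies $\cos\beta = \sin(\angle ABO) = r\sin(\theta-\psi)/x$. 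Pinning down the angle conventions --- that $\theta-\psi$ is the correct central angle and that the tangent makes the complementary angle with the chord --- is the delicate part; the remaining calculus is routine.

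Substituting $\cos\beta = r\sin(\theta-\psi)/x$ into the stationarity equation collapses it to
\[
\frac{\cos\phi_D\sin\theta}{\sqrt{1-\cos^2\phi_D\cos^2\theta}} = \frac{\cos\beta}{\nu},
\]
and solving for $\beta$ evaluated at the stationary point $\theta=\theta^*$ produces \eqref{eq:beta} verbatim. To confirm that this critical point is genuinely a maximizer rather than a minimizer, I would finally verify the sign of $d^2\ptheta/d\theta^2$ there, or, more economically, argue that $\ptheta$ is concave on the admissible interval $[\psi,\theta_t]$ from the proof of Lemma~1 and dominates its boundary values; together with the uniqueness of the breaching point supplied by the Remark, this certifies that the computed $\theta^*$ (equivalently $\beta^*$) attains the maximum payoff.
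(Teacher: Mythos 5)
Your proposal is correct and follows essentially the same route as the paper: both reduce the problem to the first-order stationarity condition $d\tau_D^\theta/d\theta = d\tau_A^\theta/d\theta$ in the single variable $\theta$, with the only presentational difference being that you differentiate the analytic expression \eqref{eq:taua} and then identify $r\sin(\theta-\psi)/x$ with $\cos\beta$ via the law of sines in $\triangle ABO$, whereas the paper obtains $d\tau_A^\theta = d\theta\cos\beta/\nu$ directly by projecting the displacement $BB'$ of the breaching point onto the chord $AB$. Your closing step of checking the second-order condition (or concavity on $[\psi,\theta_t]$) is a sound addition that the paper omits --- it simply assumes the optimum exists and is an interior stationary point --- though you would need to actually carry out that verification for the argument to be stronger than the paper's.
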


\begin{proof}
Assuming that $p$ has an optimal value, we take a derivative of $\ptheta$ and set it to zero:
\bql
dp^\theta(\zd,\za,\theta^*) = d\tdtheta-d\tatheta = 0 \label{eq:dp}
\eql
Focusing on d$\tdtheta$ and d$\tatheta$ from \eqref{eq:dp}, we have
\bnl
d\tdtheta &=& \tau_D^{\theta+d\theta}(\zd)-\tdtheta \notag\\
&=& \cos^{-1}{(\cos{\phi_D}\cos{(\theta+d\theta)})} \notag\\
&-&\cos^{-1}{(\cos{\phi_D}\cos{\theta})}  
\label{eq:dtd}
\enl
\bql
d\tatheta = \frac{|AB'|}{\nu} - \frac{|AB|}{\nu} 
= \frac{|BB'|\cos{\beta}}{\nu} 
= \frac{d\theta\cos{\beta}}{\nu} 
\label{eq:dta}
\eql
Together with \eqref{eq:dp}, \eqref{eq:dtd} and \eqref{eq:dta}, we obtain
\bnl
\beta^* &=& \cos^{-1}\left(\nu\frac{\cos^{-1}{(\cos{\phi_D}\cos{(\theta+d\theta)})}}{d\theta}\right)\notag\\
&=& \cos^{-1}\left(\nu\frac{\cos{\phi_D}\sin{\theta^*}}{\sqrt{1-\cos^2{\phi_D}\cos^2{\theta^*}}}\right)
\enl
\end{proof}

The second equation can be obtained from the geometry. Similar to \eqref{eq:x2}, we have
\bql
r^2 = x^2+1^2-2x\cos\left(\frac{\pi}{2}+\beta\right) \label{eq:r2}
\eql
Solving for $x$ using quadratic formula, \eqref{eq:r2} gives
\bql
x = k\pm\sqrt{k^2+r^2-1}
\eql
where 
\beq
k = \cos\left(\frac{\pi}{2}+\beta\right) = -\sin\beta
\eeq
To satisfy $x>0$ given $r>1$, we take
\bnl
x &=& -\sin\beta+\sqrt{(-\sin\beta)^2+r^2-1} \notag \\
&=& -\sin\beta+\sqrt{r^2-\cos^2\beta}
\label{eq:x}
\enl
Squaring both sides of \eqref{eq:x}, we obtain
\bql
x^2 = r^2+\sin^2\beta-\cos^2\beta-2\sin\beta\sqrt{r^2-\cos^2\beta} \label{eq:x22}
\eql

\begin{remark} 
For $a\geq0$, the following is hold:
\bql
a\sin x+b\cos x = \sqrt{a^2+b^2}\sin(x+\phi)
\eql
where
\beq
\phi = \sin^{-1}\left(\frac{b}{\sqrt{a^2+b^2}}\right)
\eeq
\end{remark}

\sp
Comparing \eqref{eq:x2} and \eqref{eq:x22} with Remark 2, we get
\bnl
r\cos(\theta-\psi) &=& \sin\beta\sqrt{r^2-\cos^2\beta}+\cos^2\beta \notag\\
&=& r\sin\left(\beta+\sin^{-1}\left(\frac{\cos\beta}{r}\right)\right)
\enl
which can be simplified to give
\bql
\cos\beta = r\cos(\psi-\theta-\beta)
\eql
Solving for $\theta$, we have
\bql
\theta = \psi-\beta+\cos^{-1}\left(\frac{\cos\beta}{r}\right) \label{eq:theta}
\eql
By substituting \eqref{eq:beta} into \eqref{eq:theta}, we can solve for $\theta$.

\begin{definition} 
Suppose the position of defender $\mf z_D$ and intruder $\mf z_A$ are given. Then we define the \textit{optimal breaching angle} as $\theta^*$ that satisfies \eqref{eq:beta} and \eqref{eq:theta}. We define the corresponding \textit{optimal approach angle} as $\beta^*$ and \textit{optimal breaching point} as a breaching point that forms $\theta^*$ and $\beta^*$. 
\end{definition}


\begin{remark}
Given the position of defender $\mf z_D$ and intruder $\mf z_A$, there exists a unique pair of optimal approach angle and optimal breaching angle.
\end{remark}

The candidate optimal strategy for agents is to move towards the optimal breaching point. The payoff from this strategy is given by $\pthetastar$.

\section{Optimality Proof \label{sec:optimality}}
This section proves the optimality of the candidate strategy aforementioned in Sec. \ref{sec:candidate}. We first introduce the winning region of each agent and then prove the optimality based on the Nash equilibrium provided in \eqref{eq:nash}.

\subsection{Winning region}
We prove that the barrier (for the game of kind) is given by a simple closed curve with $\pp=0$ and characterize the winning region.

\begin{lemma} 
Given $\mf z_D$ and a breaching angle $\theta$, there exists a unique corresponding position $\mf z_A$ that makes $\ptheta = 0$.
\end{lemma}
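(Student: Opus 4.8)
The plan is to reduce the apparently two-dimensional search for $\za=[\psi,0,r]$ to a one-dimensional problem along a single ray, where monotonicity makes both existence and uniqueness transparent. Fix $\phi_D$ and $\theta$. Since $\psi_D=0$ by the symmetry convention, the breaching angle already pins down the breaching point $B$ as the perimeter point at azimuth $\theta$, and with it the defender target time $\tdtheta=\cos^{-1}(\cos\phi_D\cos\theta)\triangleq T$, which by \eqref{eq:taud} is a fixed positive constant whenever $(\phi_D,\theta)\neq(0,0)$. The key observation is that $\tdtheta$ does \emph{not} depend on $\za$, so the target condition $\ptheta=0$ collapses to $\tatheta=T$.

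Next I would use Theorem 1 to remove the remaining directional freedom. Equation \eqref{eq:beta} expresses the optimal approach angle $\beta^*$ as a function of $\phi_D$ and $\theta$ \emph{only}; hence, for the prescribed $(\phi_D,\theta)$, $\beta^*$ is determined. Fixing $B$ together with $\beta^*$ fixes a single ray $\ell$ emanating from $B$: the tangent to the perimeter at $B$ is perpendicular to the radius $\overline{OB}$, and the conventions of Lemma 1 ($\psi\le\theta\le\theta_t$, with $\beta^*\in[0,\pi/2]$) select the unique exterior-pointing ray that makes angle $\beta^*$ with that tangent. Every intruder position consistent with the optimal straight-line approach to $B$ therefore lies on $\ell$, and the geometric relation \eqref{eq:theta} is automatically satisfied at every point of $\ell$.

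Then I would parameterize $\ell$ by the distance $x=\lVert\za-\zb\rVert\ge 0$ from $B$. Along the outward ray one has $r>1$ for every $x>0$, so each $x$ yields a valid intruder configuration. By \eqref{eq:taua}, $\tatheta=x/\nu$ is strictly increasing in $x$ from $0$, while $\tdtheta=T$ stays constant; thus $\ptheta=T-x/\nu$ is strictly decreasing in $x$, equal to $T>0$ at $x=0$ and tending to $-\infty$. The intermediate value theorem together with this strict monotonicity yields exactly one root, $x^*=\nu T$, and the corresponding point of $\ell$ is the unique $\za$ with $\ptheta=0$; substituting back into \eqref{eq:theta} recovers the associated $\psi$ and $r$.

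I expect the main obstacle to be the bookkeeping in the second step, namely arguing that $B$ and $\beta^*$ determine a \emph{unique} exterior ray (the correct side of the tangent) and that moving along it keeps $r>1$, so that the reduction to a single variable $x$ is legitimate; once the problem lives on $\ell$, the monotonicity in $x$ makes the conclusion immediate. A secondary point to verify is that $\beta^*$ in \eqref{eq:beta} is well defined, i.e.\ that the argument of the $\cos^{-1}$ remains in $[-1,1]$ over the admissible range of $\theta$, and that $T>0$, so that a strictly positive $x^*$ exists; the latter simply excludes the already-captured configuration $\psi=\phi_D=0$.
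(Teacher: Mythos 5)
Your proposal is correct and follows essentially the same route as the paper: fix the breaching point and $\tau_D$ from \eqref{eq:taud}, fix the approach angle via \eqref{eq:beta}, and place the intruder on the resulting outward ray at distance $\nu\tau_D$ from $B$. Your explicit monotonicity argument along that ray is just a slightly more careful rendering of the paper's uniqueness step, which instead appeals to the uniqueness of the $(\theta^*,\beta^*)$ pair (Remark 3).
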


\begin{proof}
From \eqref{eq:beta}, we first obtain a value of corresponding $\beta$. Since $\mf z_D$ is given, we can compute $\tau_D$ from \eqref{eq:taud}, which must be equal to $\tau_A$ from the condition $\ptheta = 0$ and \eqref{eq:z2theta}. Therefore, the distance between the breaching point and intruder position is $\tau_A\nu$. Knowing $\theta$, $\beta$, and distance between $\mf z_A$ and the breaching point, there exists a position of intruder $\mf z_A$. By Remark 3, there is a unique pair of $\theta$ and $\beta$ that are associated with $\mf z_A$, which guarantees the uniqueness with $\ptheta=0$ at $\mf z_A$. In this case, $\theta = \theta^*$ and $\beta = \beta^*$ corresponding to $\mf z_D$ and $\mf z_A$.
\end{proof}

\begin{lemma} 
Given $\mf z_D$, the set $\mathcal{C}(\mf z_D)=\{\mf z_A\mid p(\mf z_D, \mf z_A, \mf z_B)=0\}$ forms a simple closed curve (i.e., a connected curve that does not intersect with itself and ends at the same point where it begins).
\end{lemma}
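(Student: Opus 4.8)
The plan is to show that the zero-payoff set $\mathcal{C}(\mf z_D)$ is the continuous image of a circle (the closed interval of breaching angles with endpoints identified), and that this image is injective, so that it is a simple closed curve. By Lemma~3, each breaching angle $\theta$ determines exactly one intruder position $\mf z_A$ with $p^\theta = 0$; call this map $\theta \mapsto \mf z_A(\theta)$. The strategy is to parameterize the full range of $\theta$ over one revolution around the defender's azimuth (i.e., $\theta \in [-\pi, \pi)$ with endpoints identified) and argue that $\mf z_A(\theta)$ traces out a closed loop that is both continuous and non-self-intersecting.

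First I would establish \emph{continuity}: the map $\theta \mapsto \mf z_A(\theta)$ is continuous because $\tau_D^\theta$ in \eqref{eq:taud} is a continuous (indeed smooth) function of $\theta$, the approach angle $\beta$ from \eqref{eq:beta} depends continuously on $\theta$, and the reconstruction of $\mf z_A$ from $(\theta, \beta, \tau_A\nu)$ via the geometry in \eqref{eq:x} is continuous in its arguments. Composing these, the intruder position varies continuously as $\theta$ sweeps through its range. Next I would establish \emph{closure}: since the configuration is invariant under a full $2\pi$ rotation in azimuth, the endpoints $\theta = -\pi$ and $\theta = \pi$ map to the same physical intruder position, so the curve returns to its starting point. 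Together, continuity and closure show $\mathcal{C}(\mf z_D)$ is a closed curve.

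The \emph{injectivity} (no self-intersection) is where the real content lies and is the main obstacle. I would argue it by exhibiting a strict monotonicity that forces distinct $\theta$ values to yield distinct $\mf z_A$. The natural handle is the defender target time $\tau_D^\theta = \cos^{-1}(\cos\phi_D\cos\theta)$, which is strictly monotone in $|\theta|$ on $[0,\pi]$; along the zero set this equals $\tau_A = x/\nu$, pinning down the radial distance $x$ from the breaching point. Because $\theta$ also fixes the breaching point $\mf z_B = [\theta,0,1]$ and, via Lemma~1 and Remark~1, the pair $(\theta,\beta)$ is in one-to-one correspondence with the geometry at $B$, each $\theta$ reconstructs a \emph{unique} $\mf z_A$, and conversely each point on the curve is hit by exactly one $\theta$. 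The subtlety is ensuring that two different breaching angles cannot accidentally produce the same intruder location through different $(\theta,\beta)$ combinations; here I would invoke Remark~3, which asserts a unique pair $(\theta^*,\beta^*)$ for each $\mf z_A$, so the inverse map $\mf z_A \mapsto \theta$ is well-defined on $\mathcal{C}(\mf z_D)$.

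Finally, I would assemble these facts: the map $\theta \mapsto \mf z_A(\theta)$ is a continuous injection on $[-\pi,\pi)$ with matching endpoints, hence a continuous bijection from a circle onto its image, which is by definition a simple closed curve. The one technical point worth flagging is that the range of admissible $\theta$ on the zero set must genuinely cover a full loop around the defender rather than a proper sub-arc; I would confirm this by noting that for each azimuthal direction there is a unique zero-payoff intruder position, so sweeping $\theta$ through a full revolution sweeps the curve exactly once around the perimeter, completing the argument.
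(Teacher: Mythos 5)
Your proof is correct and reaches the lemma's conclusion, but it takes a genuinely different route from the paper at the crucial no-self-intersection step. The paper also parameterizes the curve by $\theta$ and leans on the uniqueness from the preceding lemma, but its main technical content is an explicit monotonicity computation: from the geometric relation $r\cos(\theta-\psi)=1+x\sin\beta$ it derives
\[
\frac{d\psi}{d\theta} \;=\; 1+\frac{\frac{d}{d\theta}\lp x\sin\beta\rp}{r\sin(\theta-\psi)}\;>\;0,
\]
so the intruder's azimuth $\psi$ increases strictly with $\theta$ and the constructed points wind monotonically once around the origin; this simultaneously yields injectivity and the fact that the curve encloses the perimeter. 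You instead make the topological skeleton explicit: a continuous map on $[-\pi,\pi)$ with endpoints identified, injective because Remark 3 assigns each $\mf z_A$ a unique optimal pair $(\theta^*,\beta^*)$ (so two distinct $\theta$'s producing the same $\mf z_A$ would contradict that uniqueness), hence a continuous bijection from a circle onto a compact planar image, which is a Jordan curve. Your argument is strictly sufficient for the statement as written --- a continuous injection of $S^1$ into the plane is a simple closed curve regardless of winding behavior --- and it is more careful than the paper about why the construction is continuous. What the paper's computation buys, and what your closing ``technical point'' asserts without proof, is the stronger geometric fact that the curve is a graph over the azimuth and winds exactly once around the origin; that extra information is what the subsequent winning-region lemma implicitly relies on when it distinguishes the regions inside and outside $\mathcal{C}(\mf z_D)$. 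If your version is to support that later use, you would still need the $d\psi/d\theta>0$ computation or an equivalent winding argument.
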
 

\begin{proof}
By Lemma 2, given $\mf z_D$ and a breaching angle $\theta$, there is a unique $\mf z_A$ with $p(\mf z_D, \mf z_A, \zb)=0$. By continuously varying $\theta$ from $0$ to $2\pi$, unique positions of $\mf z_A$ are continuously constructed. We will prove that this construction results in a simple closed curve by showing that $d\psi/d\theta>0$ so that $\mf z_A$ continuously moves to make a loop around the origin and the uniqueness of each $\mf z_A$ does not allow the curve to cross itself. 

From Fig.~\ref{fig:geometric_interpretation}, it is easy to obtain the following equation:
\bql
r\cos(\theta-\psi)=1+x\sin\beta \label{eq:rcos}
\eql
Taking the derivative of \eqref{eq:rcos} with respect to $\theta$ and rearranging the equation to solve for $d\psi/d\theta$, we have
\bql
\frac{d\psi}{d\theta} = 1+\frac{\frac{d}{d\theta}(x\sin\beta)}{r\sin(\theta-\psi)}
\eql
We know $r>0$ and $\sin(\theta-\psi)>0$. Using the facts: (i) $x=\nu\cos^{-1}(\cos\phi_D\cos\theta)$ by \eqref{eq:taud}; and (ii) $\beta$ is a function of $\theta$ by \eqref{eq:beta}, we confirm that $\frac{d}{d\theta}(x\sin\beta)$ is a function of $\theta$ and positive.
\end{proof}

\begin{lemma}[Winning region] 
For a given $\mf z_D$, define the \textit{intruder-winning region} $\mathcal{R}_A(\zd)\triangleq\{\mf z_A\mid \pp>0\}$ and the \textit{defender-winning region} $\mathcal{R}_D(\zd)\triangleq\mc R\setminus\mathcal{R}_A(\zd)$. The intruder wins against the defender if $\mf z_A(t_0)\in\mathcal{R}_A(\zd)$ and loses if $\mf z_A(t_0)\in\mathcal{R}_D(\zd)$ (see Fig.~\ref{fig:winning_region}).
\end{lemma}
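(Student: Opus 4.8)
The plan is to read off the game-of-kind outcome directly from the game-of-degree payoff together with the saddle-point structure \eqref{eq:nash}. First I would recall from Lemma~3 that the zero-payoff set $\mathcal{C}(\zd)$ is a simple closed curve in the ground plane $\mc R$; by the Jordan curve theorem it partitions $\mc R$ into a bounded interior and an unbounded exterior. Since the optimal payoff $\pstar$ varies continuously with $\za$ and vanishes exactly on $\mathcal{C}(\zd)$, its sign is constant on each of the two components. I would pin down the labeling by inspecting a limiting configuration: as $r\to\infty$ the intruder target time $\taa$ grows without bound while the defender geodesic time obeys $\tdd=\cos^{-1}(\cos\phi_D\cos\theta)\le\pi$, so $\pstar<0$ in the far field. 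Hence the exterior component is $\mathcal{R}_D(\zd)=\{\za\mid\pstar\le 0\}$ and the interior is $\mathcal{R}_A(\zd)=\{\za\mid\pstar>0\}$, consistent with an intruder that is almost on the perimeter winning trivially.

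For the \emph{intruder-winning} direction, suppose $\za(t_0)\in\mathcal{R}_A(\zd)$, so $\pstar>0$, and let the intruder commit to $\Gamma^*$, i.e. travel in a straight line toward the optimal breaching point. The right-hand inequality of \eqref{eq:nash} gives $\pastar\ge\pdastar=\pstar>0$ for \emph{every} admissible defender control $\Omega$, hence $\taa<\tdd$ along this play. The intruder therefore reaches the breaching point on the perimeter ($r=1$) strictly before the defender and with nonzero angular separation, which is an intruder win under the terminal condition of Section~\ref{sec:problemformulation}.

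For the \emph{defender-winning} direction, suppose $\za(t_0)\in\mathcal{R}_D(\zd)$, so $\pstar\le 0$, and let the defender commit to $\Omega^*$, the geodesic toward the optimal breaching point. The left-hand inequality of \eqref{eq:nash} now yields $\pdstar\le\pdastar=\pstar\le 0$ for \emph{every} intruder control $\Gamma$, i.e. $\tdd\le\taa$ regardless of which breaching point the intruder aims for. Thus the intruder cannot arrive at any point of the perimeter ahead of the defender; combining this with the stabilization remark of Section~\ref{sec:problemformulation}, which asserts that once the relative state reaches the manifold $\{\psi=\phi_D=0\}$ the defender maintains it by its speed advantage, the defender drives the state to $\phi_D(t_f)=\psi(t_f)=0$, a defender win.

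The step I expect to be the main obstacle is the defender-winning direction. The inequality from \eqref{eq:nash} only certifies that the defender reaches the \emph{instantaneous} optimal breaching point no later than the intruder, whereas the intruder may continuously re-aim and thereby shift $\theta^*$ along the trajectory. Upgrading this pointwise time comparison into an actual capture requires an invariance argument: using Remark~3 (uniqueness of the pair $(\theta^*,\beta^*)$) and the continuity established in Lemma~3, I would show that under $\Omega^*$ the sign of $\pstar$ cannot switch from negative to positive, so the barrier $\{\pstar=0\}$ is never crossed outward and the trajectory is funneled onto the capture manifold. Making this funneling rigorous, rather than merely comparing arrival times at a single fixed breaching point, is where the real work lies.
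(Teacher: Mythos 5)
Your proposal is essentially sound and reaches the same conclusion as the paper, but the route differs in two notable respects. First, you are more careful than the paper about the topology: the paper simply asserts that $\mathcal{R}_A(\zd)$ is the region inside $\mathcal{C}(\zd)$ and $\mathcal{R}_D(\zd)$ the region outside, whereas you justify the labeling via the Jordan curve theorem, continuity of the payoff, and the far-field limit $r\to\infty$ where $\taa\to\infty$ while $\tdd\le\pi$; that identification step is a genuine improvement over the paper's bare assertion. Second, for the intruder-winning direction the paper does \emph{not} invoke \eqref{eq:nash}: it argues geometrically, letting $C$ be the intersection of $\overline{BA}$ with $\mathcal{C}(\zd)$, noting $|\overline{AB}|<|\overline{BC}|$ because $A$ lies inside the curve, and using $p=0$ on the barrier to equate the defender's minimal time to $B$ with the intruder's travel time over $\overline{BC}$, which exceeds its time over $\overline{AB}$. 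Your version instead cites the right-hand inequality $\pdastar\le\pastar$, which at this point in the paper is only a stated objective --- it is proved as Lemma~12 in Section~IV.B, \emph{after} the winning-region lemma. This is not fatal (the later optimality lemmas use the winning regions only as sign sets $\{p<0\}$ rather than as game-of-kind conclusions, so no genuine circularity arises), but it inverts the paper's logical order and makes your lemma lean on the full optimality machinery where elementary geometry suffices. For the defender-winning direction you and the paper do essentially the same thing: both defer the invariance/funneling argument to Section~IV.B, and you correctly identify that this step --- the content of Lemmas~10 and~11, showing the barrier is never crossed outward under the defender's candidate strategy --- is where the real work lies.
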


\begin{proof}
Construct a simple closed curve $\mathcal{C}(\zd)$ with $\pp=0$ from Lemma 3. We argue that $\mathcal{R}_A(\zd)$ is the region inside $\mathcal{C}(\zd)$ while $\mathcal{R}_D(\zd)$ is the region outside $\mathcal{C}(\zd)$. Suppose $\mf z_A(t_0)\in\mathcal{R}_A(\zd)$. Then, the optimal breaching point $B$ can be obtained from the initial intruder location $A$. Let $C$ denote the intersection of $\overline{BA}$ and $\mathcal{C}(\zd)$. Then, $|\overline{AB}| < |\overline{BC}|$ since $\mf z_A(t_0)$ is inside $\mathcal{C}(\zd)$. This guarantees that the intruder wins against $D$ if it moves in a straight line towards $B$ because it will take less time for the intruder to reach the perimeter than for the defender. No matter what the defender does, the minimal time for it to reach $B$ is equal to the time for the intruder to travel $\overline{BC}$ since $\pp=0$ on $\mc C(\zd)$, which takes more time than to travel $\overline{AB}$.
Thus, it follows $\tau_D>\tau_A$ and $\pp>0$. 

In case $\mf z_A(t_0)\in\mathcal{R}_D(\zd)$, $\mf z_A(t_0)$ is outside $\mathcal{C}(\zd)$. In this region, the condition $\pp<0$ is hold and the intruder loses in either of the following way: (i) the intruder is outside of $\mathcal{R}_A(\zd)$ indefinitely; or (ii) it approaches to the perimeter while defender can maintain $\pp<0$ with its optimal strategy (see Sec IV.B). 
\end{proof}

The instances of winning region is shown in Fig.~\ref{fig:winning_region}. As can be seen, variation in parameters change the shape of the winning region. The first three figures in Fig.~\ref{fig:winning_region} show that the shape gets closer to a circle as $\phi_D$ increases from $0$ to $0.5\pi$, and the last two figures demonstrate that the size of winning region gets smaller with low $\nu$ because slow intruder has to be closer to the perimeter to win the game or would be caught by defender otherwise. 

\begin{figure}[t]
\centering
\includegraphics[width=8cm]{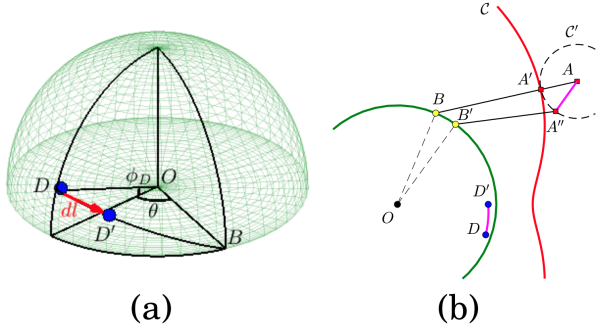}

\caption{
(a) 3D view of the game shows the defender movement along a geodesic. (b) Small movements of defender and intruder are shown in top-down view.
}
\label{fig:defender_strategy}
\end{figure}

\subsection{Proof of optimality}
To prove the optimality of the candidate strategy, we will first prove the left inequality of \eqref{eq:nash}, and then prove the right side to conclude that the candidate strategies $\Omega$ and $\Gamma$ are optimal. To prove that $\pdstar\leq\pdastar$, let the defender stick to the candidate strategy (i.e., defender always moves toward the optimal breaching point). Fig.~\ref{fig:defender_strategy} provides the geometric interpretation of such scene that the defender $D$ moves towards $B$ along a geodesic.

\begin{lemma}[Conservation of payoff] 
Given $\mf z_D$ and $\mf z_A$, if both defender and intruder move towards the optimal breaching point at their maximum speeds, $\pp$ stays the same.  
\end{lemma}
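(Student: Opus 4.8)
The plan is to fix the optimal breaching point $B$ determined by the initial configuration and to show that the defender's and intruder's target times to $B$ decrease at exactly the same rate, so that their difference $\pp = \tdd - \taa$ remains constant. Once $B$ is fixed in the world frame, $\pp$ is simply a (rescaled) difference of two distances-to-$B$: by \eqref{eq:taud}, $\tdd = \cos^{-1}(\cos\phi_D\cos\theta)$ is the great-circle length from the defender to $B$ on the unit hemisphere, while $\taa = x/\nu$ with $x = \norm{\za-\zb}$ is the straight-line distance on the ground plane. Thus it suffices to compute $\tfrac{d}{dt}\tdd$ and $\tfrac{d}{dt}\taa$ along the prescribed motions and check that they agree. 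The intruder case is immediate: moving in a straight line toward the fixed point $B$ at its maximum speed $\nu$ gives $\dot x = -\nu$, hence $\tfrac{d}{dt}\taa = \dot x/\nu = -1$.

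For the defender I would argue that, since it travels at unit speed along the minimizing geodesic to $B$, the remaining arc is arc-length parameterized and so its length decreases at unit rate, $\tfrac{d}{dt}\tdd = -1$. To make this precise I would recognize $\tdd$ as the central angle between the defender and $B$ on the unit sphere and differentiate it along the state dynamics $\bs f$ using the control $\omega_D(t)$ that traces the great circle through $B$; the substance of the step is that this geodesic-following control yields exactly $\tfrac{d}{dt}\cos^{-1}(\cos\phi_D\cos\theta) = -1$, whereas any non-geodesic unit-speed motion would shrink the central angle strictly more slowly.

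Combining the two rates gives $\tfrac{d}{dt}\pp = \tfrac{d}{dt}\tdd - \tfrac{d}{dt}\taa = -1-(-1)=0$, so $\pp$ is conserved along the joint motion toward $B$; intuitively, both players race to the same point at their respective maximum speeds, so the intruder's time advantage (or deficit) is preserved. I expect the defender step to be the main obstacle, since it requires confirming that the candidate strategy genuinely realizes the unit-rate decrease of the central angle --- equivalently, that a unit-speed geodesic is arc-length parameterized --- which should reduce to identifying the correct $\omega_D(t)$ and substituting it into $\bs f$.
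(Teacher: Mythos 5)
Your proof is correct, but it takes a genuinely different and more substantive route than the paper's. The paper disposes of this lemma in one sentence, essentially asserting that the optimal breaching point is \emph{defined} so that the payoff does not change and citing \eqref{eq:dp}; note that \eqref{eq:dp} is the first-order optimality condition in the breaching angle $\theta$ (a spatial variation at a fixed instant), not a statement about the evolution of $\pp$ in time, so the paper's justification is at best indirect. You instead fix the initial optimal breaching point $B$ and compute the time derivatives of the two target times along the prescribed motions: $\taa = x/\nu$ with $\dot x = -\nu$ gives rate $-1$, and unit-speed travel along the minimizing great-circle arc to $B$ (which, one can check, stays on the closed upper hemisphere, so it is the geodesic realizing \eqref{eq:taud}) gives $\frac{d}{dt}\tdd = -1$ by arc-length parameterization. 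Hence $\frac{d}{dt}\pp = 0$. This is the honest kinematic argument the lemma actually needs, and it has the added benefit of not presupposing that the optimal breaching point is conserved (that is the paper's Lemma 6, which is proved \emph{from} this lemma, so your choice to work with the fixed initial $B$ avoids any circularity). Your parenthetical claim that non-geodesic unit-speed motion shrinks the central angle strictly more slowly is true but not needed here; it belongs to the later optimality argument for the defender.
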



\begin{proof}
Optimal breaching point is defined as targeted position for both defender and intruder to move towards to guarantee non-changing payoff, as stated in \eqref{eq:dp}.
\end{proof}

\begin{lemma}[Conservation of optimal breaching point] 
Given $\mf z_D$ and $\mf z_A$, if both defender and intruder move towards the initial optimal breaching point at their maximum speeds, the optimal breaching point stays the same.
\end{lemma}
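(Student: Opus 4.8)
The plan is to show that the two equations that \emph{define} the optimal breaching point, namely the first-order optimality condition \eqref{eq:beta} and the geometric relation \eqref{eq:theta}, continue to hold for the \emph{same} point $B$ as both players advance toward it. Since for any configuration the optimal breaching point is unique (Remark 3, Definition 1), once I verify that $B$ still satisfies both defining equations at every later instant, uniqueness forces $B$ to remain the optimal breaching point. I would dispatch \eqref{eq:theta} first: it is a geometric identity among $r$, $x$, $\beta$, $\psi$, and $\theta$ for a fixed breaching point, obtained from the law of cosines in $\triangle ABO$. Because $\theta-\psi=\psi_B-\psi_A$ does not involve the defender's azimuth, the defender's motion is irrelevant to it, and as the intruder slides along $\overline{AB}$ the interior angle at $B$ (equal to $\tfrac{\pi}{2}+\beta$) and the line $\overline{AB}$ are unchanged, so \eqref{eq:theta} is automatically preserved. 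The real content therefore lies in preserving \eqref{eq:beta}.

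To preserve \eqref{eq:beta} I would show that its two sides are \emph{individually} invariant. Note that the left side is $\cos\beta$, with $\cos\beta/\nu=\partial\tatheta/\partial\theta$ the marginal intruder target time by \eqref{eq:dta}, while the right side is exactly $\nu$ times the marginal defender target time $\partial\tdtheta/\partial\theta$ obtained by differentiating \eqref{eq:taud}; thus \eqref{eq:beta} is just the stationarity condition $\partial\tatheta/\partial\theta=\partial\tdtheta/\partial\theta$ of \eqref{eq:dp}. For the intruder: since it travels in a straight line directly at the fixed point $B$, the segment $\overline{AB}$ stays on one and the same line, so the approach angle $\beta$ between $(\za-\zb)$ and the tangent at $B$ is constant, and hence so is $\partial\tatheta/\partial\theta=\cos\beta/\nu$.

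For the defender, which is the crux, the key observation is that moving along the geodesic toward $B$ keeps the defender on a single great circle through $B$, so the \emph{incoming direction} of the geodesic at $B$ never changes. By the first variation of arc length on the unit sphere (Gauss's lemma), $\partial\tdtheta/\partial\theta$ equals the cosine of the angle at $B$ between this geodesic and the perimeter circle — precisely the expression $\cos\phi_D\sin\theta/\sqrt{1-\cos^2\phi_D\cos^2\theta}$ that differentiating \eqref{eq:taud} produces. Since that angle at $B$ is pinned down by the great circle, $\partial\tdtheta/\partial\theta$ is constant along the approach. As a self-contained alternative one could substitute the geodesic dynamics $\dot\phi_D=\omega_D$, $\dot\psi_D=\sqrt{1-\omega_D^2}/\cos\phi_D$ into that expression and check that its time derivative vanishes, but the variational argument avoids the algebra. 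I expect this to be the main obstacle, since it is where the spherical geometry enters and where a naive coordinate computation becomes unwieldy.

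With both $\cos\beta$ and $\nu\,\partial\tdtheta/\partial\theta$ shown constant, and equal at $t_0$ because $B$ is initially optimal, \eqref{eq:beta} holds for $B$ at every instant; combined with the automatically satisfied \eqref{eq:theta} and the uniqueness of Remark 3, $B$ remains the optimal breaching point. I would close by noting consistency with Lemma 5: the payoff $\pp$ evaluated at $B$ is conserved precisely because $B$ stays the stationary point of $p^\theta$, so neither player gains by re-aiming. I would also emphasize that the only feature of the motion actually used is that each player heads \emph{directly} at $B$ — a straight line for the intruder and a geodesic for the defender — while the maximum-speed hypothesis is what links this invariance back to the payoff conservation of Lemma 5.
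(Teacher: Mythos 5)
Your proof is correct, but it takes a genuinely different --- and substantially more explicit --- route than the paper's. The paper disposes of this lemma in two sentences: by Lemma 5 the payoff $\pp$ is conserved, that payoff ``characterizes'' the optimal breaching point, and by Remark 3 the point is unique, hence unchanged. That is terse to the point of circularity: conservation of the scalar value $p$ does not by itself rule out the maximizer drifting along a level set, and Lemma 5's own proof is little more than a restatement of \eqref{eq:dp}. You instead verify directly that the original point $B$ continues to satisfy the two defining equations \eqref{eq:beta} and \eqref{eq:theta} at every later instant and only then invoke the uniqueness of Remark 3 --- which is exactly the content the paper's argument presupposes. Your decomposition is clean: \eqref{eq:theta} involves only $\theta-\psi=\psi_B-\psi_A$, $\beta$ and $r$, so it is a geometric identity preserved as the intruder slides along the fixed line $\overline{AB}$; and \eqref{eq:beta} is the stationarity condition $\partial\tau_A^\theta/\partial\theta=\partial\tau_D^\theta/\partial\theta$, whose intruder side is constant because $\beta$ is, and whose defender side equals the cosine of the angle at $B$ between the geodesic $BD$ and the perimeter (the spherical law of cosines in the right triangle $D$--$B$--foot gives $\cos\phi_D\sin\theta/\sqrt{1-\cos^2\phi_D\cos^2\theta}=\cos(\angle B)$), an angle pinned down because $D'$ stays on the same great circle through $B$. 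What your version buys is rigor, independence from Lemma 5, and the correct observation that only the direction of motion (not the maximum-speed hypothesis) matters for this particular conservation; what the paper's version buys is brevity. The one caveat is that you re-establish only the first-order condition, so you are relying --- as the paper also does --- on Remark 3's unproven assertion that the stationary pair $(\theta^*,\beta^*)$ is unique.
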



\begin{proof}
By Lemma 5, $\pp$ stays the same, which characterizes the optimal breaching point. By Remark 3, such optimal breaching point is unique given $\zd$ and $\za$ so the optimal breaching point stays the same. 
\end{proof}

\begin{lemma}[Degeneracy] 
Suppose $\phi_D = 0$. Without the loss of generality assume $\theta\in(0,\pi)$. Then the optimal defender strategy is to move towards the optimal breaching point at its maximum speed.
\end{lemma}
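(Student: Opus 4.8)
The plan is to exploit the fact that when $\phi_D=0$ the defender sits on the ground circle, which is itself a great circle of the unit sphere, and the breaching point $B=[\theta,0,1]$ lies on this same great circle. The minimum-time path on the unit sphere between two points of a great circle separated by an angle $\theta\in(0,\pi)$ is the great-circle arc of length $\theta$, traversed at unit speed, and crucially this arc keeps the defender at elevation zero for the entire motion. I would therefore first argue that moving toward the optimal breaching point along this arc realizes $\tdd=\theta$, the smallest possible target time, so that any alternative motion reaches $B$ no sooner. Since the defender is confined to the hemisphere $\phi_D\ge0$, any excursion that climbs onto the interior of the hemisphere and returns produces a strictly longer path to the ground-level point $B$, hence a strictly larger $\tdd$.

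Next, to connect this to the control $\omega_D$, I would differentiate $\tdd=\cos^{-1}(\cos\phi_D\cos\theta)$ along the dynamics, recalling that $\dot\psi_D=\sqrt{1-\omega_D^2}/\cos\phi_D$ and, by Lemma 6, that $B$ (hence $\psi_B$) is conserved so $\dot\theta=-\dot\psi_D$. Evaluating at $\phi_D=0$, the elevation term carries a factor $\sin\phi_D=0$ and drops out, leaving $\dot\tau_D=-\sqrt{1-\omega_D^2}$ for $\theta\in(0,\pi)$. This is minimized (most negative) exactly at $\omega_D=0$, i.e. $\dot\psi_D=1$: any control that lifts the defender off the circle spends speed on the elevation component $\dot\phi_D=\omega_D$ which, to first order, does nothing to reduce $\tdd$ and so only slows the approach to $B$. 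Because $\omega_D=0$ also preserves $\phi_D=0$, this instantaneous choice is self-consistent and persists along the whole trajectory.

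I would then close the optimality argument as follows: against the intruder's candidate play $\Gamma^*$ toward the fixed optimal breaching point, $\taa$ is a prescribed function of time, so minimizing the payoff $\pp=\tdd-\taa$ is equivalent to minimizing $\tdd$. The two preceding steps show that the great-circle, $\omega_D=0$ strategy attains this minimum, which is precisely ``move toward the optimal breaching point at maximum speed.'' Combined with Lemmas 5 and 6 (conservation of the payoff and of the optimal breaching point under the candidate strategy), this yields the claimed defender optimality in the degenerate configuration, where the game reduces to the classical planar perimeter-defense setting.

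The main obstacle I anticipate is the gap between the instantaneous and the global statements: the differentiation above is a myopic first-order computation at $\phi_D=0$, and one must rule out a strategy that temporarily accepts a slower instantaneous approach by climbing onto the hemisphere in hope of a later gain. The clean way to dispatch this is the great-circle minimality established in the first step—any path that leaves the equatorial great circle and later returns to the ground-level point $B$ is strictly longer than the equatorial arc, so no such detour can reduce $\tdd$—which promotes the first-order optimum to a global one and completes the proof.
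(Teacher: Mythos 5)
Your argument is correct, but it is a genuinely different route from the one the paper takes. The paper's proof of this lemma is essentially a one-line reduction: it substitutes $\phi_D=0$ into \eqref{eq:beta}, notes that the optimal approach angle collapses to $\beta=\cos^{-1}\nu$ (using $\sin\theta>0$ for $\theta\in(0,\pi)$), and then appeals to the planar perimeter-defense result of Shishika and Kumar, where the optimality of ``head straight for the breaching point'' has already been established. You instead build a self-contained argument: (i) the equatorial arc is the unique minimizing geodesic on the hemisphere between two ground-circle points with separation $\theta\in(0,\pi)$, so the candidate strategy attains the minimal $\tau_D$ globally, not just to first order; (ii) the instantaneous computation $\dot\tau_D=-\sqrt{1-\omega_D^2}$ at $\phi_D=0$ identifies $\omega_D=0$ as the unique minimizer and shows it is self-consistent (it preserves $\phi_D=0$); and (iii) against $\Gamma^*$ the payoff reduces to $\tau_D$ up to a prescribed function of time. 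Your version buys rigor and independence from the cited 2D work, and your closing paragraph correctly identifies and dispatches the myopic-versus-global gap that a purely first-order argument would leave open; the paper's version buys brevity at the cost of outsourcing the actual optimality argument to a reference. One small remark: you do not need Lemma 6 to justify $\dot\theta=-\dot\psi_D$, since $\tau_D$ is the travel time to a \emph{fixed} point $B$ and $\psi_B$ is constant by definition; Lemma 6 is only needed to ensure the defender keeps aiming at the same $B$ throughout, which you use correctly elsewhere.
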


\begin{proof}
If $\phi_D = 0$, \eqref{eq:beta} becomes 
\begin{equation}
    \beta = \cos^{-1}\nu,
\end{equation}
which agrees with the results obtained in the two-dimensional version of the problem studied in \cite{shishika2018local}.
\end{proof}

Notice that if $\theta=\pi$, $\beta$ is undefined by \eqref{eq:beta}. The optimal strategy in this case is not stated in the paper but this special case will be immediately resolved by the defender's vertical motion towards the breaching point corresponding to a point at $\theta=\pi$, since $\phi_D=0$ no longer holds.


\begin{lemma} 
Given $\mf z_D$ and $\phi_D>0$, the curve $\mathcal{C}(\mf z_D)=\{\mf z_A\mid p(\mf z_D, \mf z_A, \mf z_B)=0\}$ is smooth.
\end{lemma}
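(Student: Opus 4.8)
The plan is to exhibit an explicit regular smooth parametrization of $\mathcal C(\mf z_D)$ by the breaching angle $\theta$, and then conclude smoothness from the fact that it is a composition of elementary smooth functions whose $\cos^{-1}$ arguments stay strictly inside $(-1,1)$ precisely because $\phi_D>0$. By Lemma 3 the assignment $\theta\mapsto\mf z_A$ with $p^\theta=0$ already traces $\mathcal C(\mf z_D)$ once as $\theta$ runs over $[0,2\pi)$, and that proof establishes $d\psi/d\theta>0$; I would reuse both facts so that no new topological work is needed.

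First I would write each coordinate of the curve point as a function of $\theta$ alone. On the barrier $p^\theta=0$, so $\tau_A=\tau_D$ and hence $x(\theta)=\nu\,\cos^{-1}(\cos\phi_D\cos\theta)$ by \eqref{eq:taud}. The approach angle $\beta(\theta)$ is given directly by \eqref{eq:beta}. Inverting \eqref{eq:x} for the radius gives $r(\theta)=\sqrt{x^2+2x\sin\beta+1}$, and \eqref{eq:theta} then yields $\psi(\theta)=\theta+\beta-\cos^{-1}(\cos\beta/r)$. Composing, the planar curve is $\theta\mapsto\bigl(r(\theta)\cos\psi(\theta),\,r(\theta)\sin\psi(\theta)\bigr)$, and it suffices to show this map is a smooth immersion.

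Second, I would verify each building block is $C^\infty$, which reduces to checking that every $\cos^{-1}$ argument lies strictly in $(-1,1)$, since $\cos^{-1}$ is smooth on the open interval but has a vertical tangent at $\pm1$. For $x(\theta)$ this needs $|\cos\phi_D\cos\theta|\le\cos\phi_D<1$, which holds exactly when $\phi_D>0$. For $\beta(\theta)$, writing $1-\cos^2\phi_D\cos^2\theta=\sin^2\phi_D+\cos^2\phi_D\sin^2\theta\ge\sin^2\phi_D>0$ shows the denominator in \eqref{eq:beta} never vanishes, and a short estimate bounds the full argument by $\nu\cos\phi_D<1$ uniformly in $\theta$; this also forces $\beta\in(0,\pi)$, so $\sin\beta\ge0$, and since $x>0$ everywhere one gets $r>1$, making the radicand positive and $\cos\beta/r\in(-1,1)$. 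Hence $r(\theta)$ and $\psi(\theta)$ are smooth.

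Finally, for regularity I would note that in polar form the squared speed of the parametrization is $(dr/d\theta)^2+r^2(d\psi/d\theta)^2$, which is bounded below by $r^2(d\psi/d\theta)^2>0$ using $r>0$ and $d\psi/d\theta>0$ from Lemma 3. Thus the parametrization is an immersion, and combined with the simple-closed-curve property it is a smooth simple closed curve. The main obstacle, and the only place where $\phi_D>0$ is genuinely used, is securing the uniform bound that keeps the $\cos^{-1}$ arguments away from $\pm1$ at the symmetry points $\theta=0,\pi$; this is exactly the configuration where the $\phi_D=0$ case degenerates, since \eqref{eq:beta} becomes $0/0$ at $\theta=\pi$ (cf.\ Lemma 7), so care is needed to show the bound is strict rather than merely non-strict there.
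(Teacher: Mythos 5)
Your proof is correct and follows essentially the same route as the paper: both parametrize the barrier by $\theta$ through $x(\theta)=\nu\cos^{-1}(\cos\phi_D\cos\theta)$, $\beta(\theta)$ from \eqref{eq:beta}, and $r(\theta)=\sqrt{x^2+1+2x\sin\beta}$, and both locate the failure of smoothness exclusively at $\phi_D=0$. The only substantive difference is one of rigor: the paper merely asserts that the polar curvature $\kappa(\theta)$ is ``well defined'' for $\phi_D>0$, whereas you explicitly verify the facts that assertion depends on --- the $\cos^{-1}$ arguments stay strictly inside $(-1,1)$, the denominator satisfies $\sqrt{1-\cos^2\phi_D\cos^2\theta}\ge\sin\phi_D>0$, and the parametrization is regular via $d\psi/d\theta>0$ from Lemma 3 --- so your version supplies the details the paper omits.
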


\begin{proof}
To see if the curve is smooth, we find the curvature as a function of $\theta$ and observe if the value is nonzero. The curvature for a curve defined in polar coordinates is given by
\bql
\kappa(\theta)=\frac{|r^2+2r'^2-rr''|}{(r^2+r'^2)^{\frac{3}{2}}}\label{eq:k}
\eql
Since $p(\mf z_D, \mf z_A, \mf z_B)=0$, the relation \eqref{eq:r2} is valid and we can solve for $r$ to get
\bql
r=\sqrt{x^2+1+2x\sin\beta} 
\eql
Using the facts: (i) $x=\nu\cos^{-1}(\cos\phi_D\cos\theta)$ by \eqref{eq:taud}; and (ii) $\beta$ is a function of $\theta$ by \eqref{eq:beta}, we express $r$ as a function of $\theta$ and the curvature can be calculated by \eqref{eq:k}. We observe that the curvature is well defined except for when $\cos\phi_D=1$ (i.e. $\phi_D=0$).
\end{proof}

\begin{remark} 
If a curve $\mathcal{C}$ is smooth, there exists a circle $\mathcal{C'}$ with a radius $r>0$ that is tangent to $\mathcal{C}$ at any point along $\mathcal{C}$ (See Fig.~\ref{fig:levelset}(a)). Furthermore, given the circle $\mathcal{C'}$ and its tangent point $S$, there exists any other circle $\mathcal{C''}$ with a radius $r'>0$ satisfying $r>r'$ that is tangent to $\mathcal{C}$ at $S$. 
\end{remark}

\begin{lemma} 
Given $\mf z_D$, any $\mf z_A$ and corresponding $\mf z_B$ satisfy that $\mf z_A-\mf z_B$ is a normal line to the curve $\mathcal{C(\mf z_D)}=\{\mf z_A\mid p(\mf z_D, \mf z_A, \zb)=0\}$.
\end{lemma}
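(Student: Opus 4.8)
The plan is to recognize the curve $\mathcal{C}(\zd)$ as the zero level set of the optimal value function $\pstar \triangleq \max_\theta \ptheta$, and then exploit the elementary fact that the gradient of a function is everywhere orthogonal to its level sets. Concretely, I would show that $\nabla_{\za}\,\pstar$ is parallel to $\za-\zb$; since that gradient spans the normal line to $\mathcal{C}(\zd)$ at $\za$, the segment $\overline{AB}$ must be normal to the curve. Because both the intruder position $\za$ and the breaching point $\zb$ live in the ground plane $\mc R$, everything takes place in a two-dimensional space, so ``normal line'' is well defined and this gradient computation is the right object to study.

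The key steps, in order, are as follows. First I would invoke the first-order optimality condition from Theorem~1, namely $d\ptheta=0$ at $\theta=\theta^*$ (see \eqref{eq:dp}), to apply the envelope theorem: since $\theta^*$ is a smooth interior maximizer (guaranteed by the smoothness of Lemma~8 for $\phid>0$ and the uniqueness of Remark~3), the stationarity in $\theta$ kills the indirect dependence, leaving $\nabla_{\za}\,\pstar=\nabla_{\za}\,\ptheta\big|_{\theta=\theta^*}$. Second, I would note that $\tdd=\cos^{-1}(\cos\phid\cos\theta)$ in \eqref{eq:taud} depends only on $\theta$ and not on $\za$, so the defender term contributes nothing to $\nabla_{\za}\,\pstar$ and we are left with $\nabla_{\za}\,\pstar=-\nabla_{\za}\,\taa$. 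Third, with $\theta^*$ (hence $\zb$) frozen, $\taa=\lVert\za-\zb\rVert/\nu$ by \eqref{eq:taua}, whose gradient is the unit vector pointing from $B$ to $A$. Combining these gives $\nabla_{\za}\,\pstar \parallel (\za-\zb)$, and the conclusion follows.

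I would also offer the equivalent geometric reading, which makes the argument transparent and ties it to the proof of Lemma~4: perturbing $\za$ along $\mathcal{C}(\zd)$ keeps $\pstar=0$, and by the stationarity in $\theta$ one may freeze $\zb$ at its optimal value when computing the first-order change; since $\tdd$ is then unchanged, the constraint $\pstar=0$ forces $\lVert\za-\zb\rVert$ to be stationary to first order. Thus an infinitesimal displacement along the curve is tangent to the circle of radius $\lVert\za-\zb\rVert$ centered at $B$, hence orthogonal to the radius $\overline{AB}$, which is exactly the claim.

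The main obstacle is rigorously justifying the envelope/freezing step, i.e.\ establishing that the $\partial/\partial\theta$ contribution genuinely vanishes along $\mathcal{C}(\zd)$. This requires that $\theta^*(\za)$ be an interior critical point and vary smoothly with $\za$, together with differentiability of $\ptheta$; I would cite Lemma~8 (smoothness of $\mathcal{C}(\zd)$ for $\phid>0$), Remark~3 (uniqueness of the optimal pair $(\theta^*,\beta^*)$), and Theorem~1 (the interior stationarity $d\ptheta=0$) to supply exactly these hypotheses, while noting that the degenerate case $\phid=0$ excluded in Lemma~8 must be handled separately as in Lemma~7.
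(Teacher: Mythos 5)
Your proposal is correct and takes essentially the same route as the paper: the paper's own proof is exactly your ``geometric reading'' (perturb $\mf z_A$ along $\mathcal{C}(\mf z_D)$, use the stationarity condition \eqref{eq:dp} to freeze the breaching point, note $\tau_D$ is unchanged, and conclude the distance to $B$ must be stationary, else a nearby point on the curve would violate $p=0$). Your gradient/envelope-theorem formulation is simply a more rigorous packaging of that argument, and your explicit attention to the hypotheses (interior maximizer, uniqueness from Remark~3, the $\phi_D=0$ degeneracy) is if anything more careful than the paper's one-paragraph sketch.
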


\begin{proof}
Notice that $\mf z_A-\mf z_B$ is the optimal direction of the defender to minimize $p(\mf z_D, \mf z_A, \zb)$ and the path satisfies \eqref{eq:dp}. Consider $\mf z_A$'s infinitesimal nearby points (i.e. one with greater $\psi$ and the other with smaller $\psi$) on $\mathcal{C}$. Then, the optimal direction $\mf z_A-\mf z_B$ must be a normal to the curve $\mathcal{C}$, otherwise one of the nearby points would provide a shorter path, which violates that it is on $\mathcal{C}(\zd)$ satisfying $p(\mf z_D, \mf z_A, \zb)=0$.
\end{proof}

\begin{figure}[t]
\centering
\includegraphics[width=8.cm]{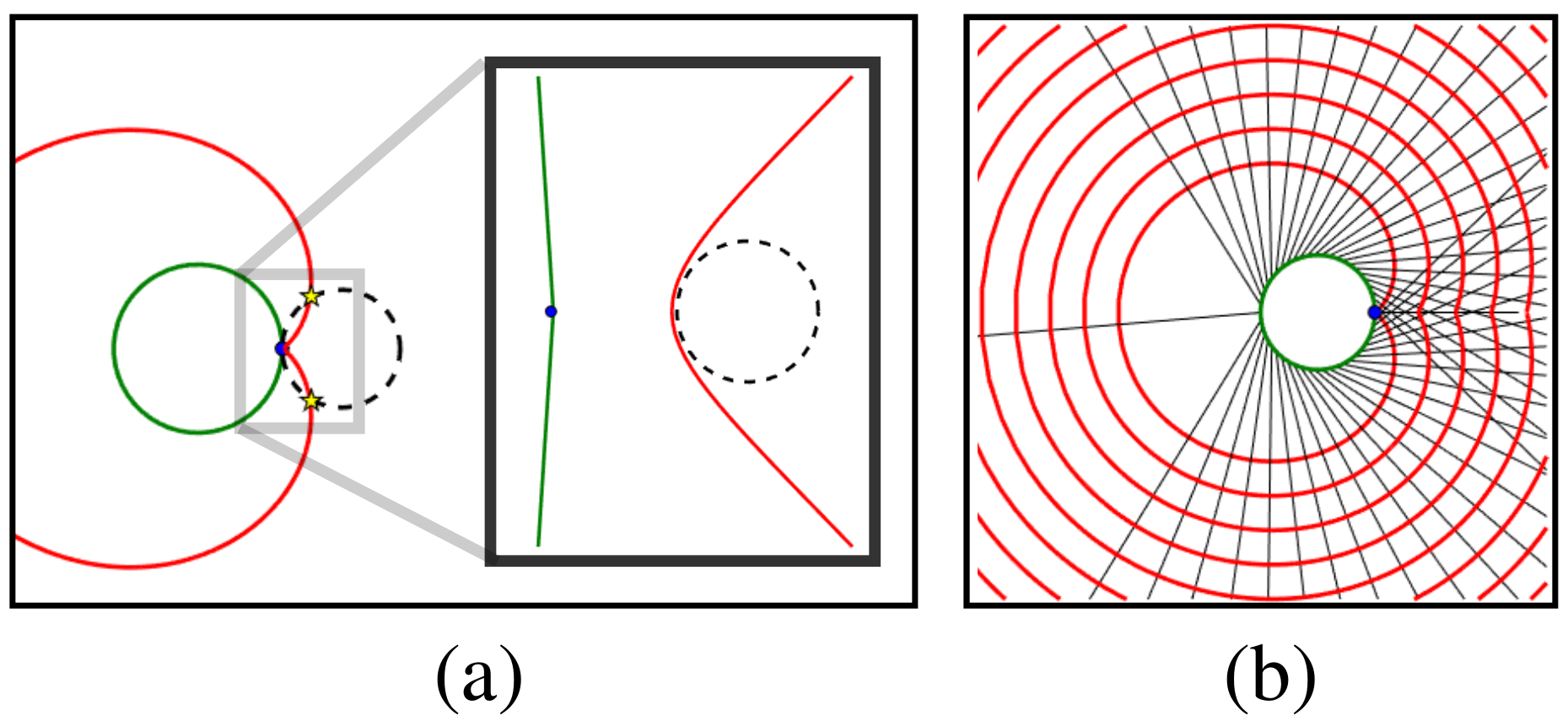}

\caption{(a) Magnified view of the barrier confirms that a circle can be tangent to a smooth curve. (b) Payoffs form a set of level sets.}
\label{fig:levelset}
\end{figure}

\begin{lemma}[Limiting case] 
Given $\mf z_D$ with $\phi_D>0$ and $\mf z_A$, such that $p(\mf z_D, \mf z_A, \zb)=0$, if defender moves towards the optimal breaching point, then $p$ is non-increasing for any intruder strategy.
\end{lemma}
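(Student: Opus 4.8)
The plan is to prove the stronger differential statement $\frac{d}{dt}p^*(\mf z_D,\mf z_A)\le 0$ along the trajectory, where $p^*$ denotes the payoff evaluated at the instantaneously optimal breaching point, and then read off that a configuration starting with $p(\mf z_D,\mf z_A,\mf z_B)=0$ can never cross into the intruder-winning region. First I would write $p(\mf z_D,\mf z_A,\mf z_B)=\tau_D(\mf z_D,\mf z_B)-\tau_A(\mf z_A,\mf z_B)$ for a fixed physical point $\mf z_B$ on the perimeter and set $p^*(\mf z_D,\mf z_A)=\max_{\mf z_B}p(\mf z_D,\mf z_A,\mf z_B)$, attained at the (unique, by Remark~3) optimal breaching point $\mf z_B=B_0$. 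On the barrier we have $p^*=0$ and $p(\mf z_D,\mf z_A,\mf z_B)\le 0$ for every $\mf z_B$, so it suffices to show that the defender's committed motion toward $B_0$ forces $\frac{d}{dt}p^*\le 0$ for every admissible intruder velocity.

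Differentiating along the trajectory gives $\frac{d}{dt}p^*=\frac{d\tau_D}{dt}-\frac{d\tau_A}{dt}+\frac{\partial p}{\partial \mf z_B}\dot B_0$, and the key observation is that the last term vanishes: $B_0$ is stationary in the breaching variable by the first-order condition \eqref{eq:dp} (equivalently Theorem~1), so the drift of the optimal breaching point contributes nothing to first order. Justifying this envelope step rigorously is where I expect the real work to lie: I would invoke the smoothness of $\mathcal{C}(\mf z_D)$ for $\phi_D>0$ together with the implicit function theorem applied to \eqref{eq:dp} to certify that $B_0$ depends differentiably on the state, which is precisely why the hypothesis $\phi_D>0$ is imposed and why the degenerate case $\phi_D=0$ is quarantined in the Degeneracy lemma. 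Lemma~8 (the segment $\mf z_A-\mf z_B$ is normal to $\mathcal{C}$) is the geometric statement underlying this stationarity.

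Next I would bound the two surviving terms by the speed limits. Since the defender travels the geodesic to the fixed point $B_0$ at unit speed and $\tau_D$ is exactly the geodesic length \eqref{eq:taud}, the range to $B_0$ shrinks at unit rate, so $\frac{d\tau_D}{dt}=-1$. For the intruder, $\tau_A=\lVert \mf z_A-B_0\rVert/\nu$ with $\lVert\dot{\mf z}_A\rVert\le\nu$, so Cauchy--Schwarz yields $\frac{d}{dt}\lVert \mf z_A-B_0\rVert\ge-\nu$ and hence $\frac{d\tau_A}{dt}\ge-1$, with equality only when the intruder runs straight at $B_0$. Combining, $\frac{d}{dt}p^*=-1-\frac{d\tau_A}{dt}\le 0$, which is the claim; geometrically this is the statement, visible in Fig.~\ref{fig:levelset}(b), that the defender's pursuit slides the intruder onto level sets of $p$ of no greater value.

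The main obstacle, as flagged, is the envelope step: making precise that the optimal breaching point moves smoothly and that its motion drops out of $\frac{d}{dt}p^*$, for which the smoothness hypothesis $\phi_D>0$ is essential. A secondary issue I would have to close is the terminal behavior — as the defender closes on $B_0$ its elevation $\phi_D$ decreases to $0$, where $\mathcal{C}(\mf z_D)$ loses smoothness, so the last instant of the trajectory must be handled by continuity of $p^*$ together with the Degeneracy lemma and the two-dimensional result of \cite{shishika2018local}.
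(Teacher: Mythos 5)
Your proof is correct in outline but takes a genuinely different route from the paper. The paper argues in discrete time on the barrier itself: it advances both players by an infinitesimal $dt$, uses the conservation lemmas (Lemmas 5 and 6) to pin the optimal breaching point and keep $p=0$ at the reference pair $(D',A')$, and then uses smoothness of $\mathcal{C}(\mf z_D)$ (Lemma 8), the tangent-circle construction (Remark 4), and normality of $\mf z_A-\mf z_B$ to $\mathcal{C}$ (Lemma 9) to show that every reachable intruder position $A''$ lies on or inside a circle tangent to $\mathcal{C}$ from the defender-winning side, hence $p\leq 0$ there. You instead prove the differential inequality $\frac{d}{dt}p^*\leq 0$ directly via a Danskin/envelope argument: the stationarity condition \eqref{eq:dp} kills the contribution of the drifting maximizer, $\frac{d\tau_D}{dt}=-1$ because the defender runs the geodesic at unit speed, and Cauchy--Schwarz gives $\frac{d\tau_A}{dt}\geq -1$, so $\frac{d}{dt}p^*=-1-\frac{d\tau_A}{dt}\leq 0$ with equality exactly for the straight-to-$B_0$ intruder. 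Your version buys two things: it does not need the barrier to be a simple closed curve or the tangent-circle/level-set machinery at all, and since nothing in the computation uses $p^*=0$ it proves the generalization (the paper's Lemma 11) in the same stroke; it also identifies the equality case cleanly. What it costs is that the entire burden shifts onto the envelope step --- differentiability (or at least Danskin-type regularity) of the maximizer $\theta^*(\mf z_D,\mf z_A)$, which via the implicit function theorem needs a nondegenerate second-order condition at $\theta^*$ that neither you nor the paper verifies, plus the implicit assumption that the maximizer stays interior to the admissible range of $\theta$. The paper's geometric lemmas carry essentially the same regularity content (smoothness and nonvanishing curvature of $\mathcal{C}$ for $\phi_D>0$), so the two arguments sit at a comparable level of rigor; your flagging of the terminal degeneration $\phi_D\to 0$, which the paper disposes of by noting it occurs only when $D$ reaches $B$, is appropriately handled by continuity.
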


\begin{proof}
Let $D,A,B$ denote the current positions of the defender, intruder, and the optimal breaching point, respectively, as shown in Fig.~\ref{fig:defender_strategy}. 
Consider an infinitesimal time $dt$ during which the defender moves towards $B$ and end up at $D'$.
Let $A'$ denote the intruder location if it moves towards the optimal breaching point at its maximum speed during this $dt$. 
By Lemma 5 and 6, the new optimal breaching point for $D'$ and $A'$ is still $B$, and $p(\mf z_{D'}, \mf z_{A'}, \zb)=0$.

Notice that if initially $\phi_D>0$, then $\phi_D=0$ will occur only when $D$ reaches $B$. Therefore, $\phi_D>0$ at $D'$.
With Lemma 8 and Remark 4, the curve $\mathcal{C}$ is smooth and there exists a circle with a radius $\bar{v}_A dt>0$ that is tangent to $\mathcal{C}$ at $A'$. 
This circle is actually centered at $A$, because by Lemma 9, $\overline{AA'}$ is a normal line to the curve $\mathcal{C}$.

Finally, by selecting sufficiently small $dt$, we can ensure that the circle does not intersect with $\mathcal{C}$.
This ensures that the circle entirely lies in the defender winning region.


Now consider any other intruder strategy that brings the intruder to point $A''$ either on or inside the circle $\mathcal{C'}$ and let $B'$ denote the optimal breaching point corresponding to $D'$ and $A''$. 
Since the circle $\mathcal{C'}$ lies in the defender winning region where $p<0$, we know that $\pp$ decreases from 0. 
Note that $p$ stays the same only if intruder continues to move towards the optimal breaching point (i.e. $A''=A'$).
\end{proof}   

\begin{lemma}[Generalization] 
Given $\mf z_D$, $\mf z_A$, and $\phi_D>0$, if defender continues to move towards the optimal breaching point, $\pp$ decreases or stays the same regardless of intruder's behavior.
\end{lemma}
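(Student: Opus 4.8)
The plan is to recognize that the \emph{Limiting case} (Lemma 10) already contains all the geometric machinery needed here; the only real gap is that it was proved on the single level set $\pp=0$ (the barrier $\mathcal{C}(\zd)$), whereas now the intruder may start at an arbitrary payoff value. I would therefore first lift the infinitesimal argument from the barrier to an arbitrary level set, and then integrate the resulting one-step statement over the entire engagement to obtain the finite-time conclusion.

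First I would fix the current payoff $c\triangleq\pp$ and consider the level curve $\mathcal{C}_c(\zd)\triangleq\{\za\mid\pp=c\}$, on which the intruder presently sits. The key observation is that the proofs of Lemmas 3, 8, and 9 never used the specific value $0$: the breaching geometry enters only through $\tau_A=\tau_D-c$, i.e. through $x=\nu(\tau_D-c)$ in place of $x=\nu\tau_D$. Hence the identical constructions show that $\mathcal{C}_c(\zd)$ is a simple closed curve, that it is smooth whenever $\phi_D>0$ (the only degeneracy remaining at $\cos\phi_D=1$), and, via Lemma 9 whose proof is already level-independent, that $\za-\zb$ is normal to $\mathcal{C}_c(\zd)$ at $\za$. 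Moreover, since $\tau_A$ increases (hence $\pp$ decreases) as the intruder moves outward along the normal $\za-\zb$ away from the perimeter, the side of $\mathcal{C}_c(\zd)$ on which the reachable set lands is exactly the region where $\pp<c$.

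With this in place I would replay the Limiting case argument verbatim on $\mathcal{C}_c(\zd)$. Let $D,A,B$ be the current defender, intruder, and optimal breaching point. Over an infinitesimal $dt$ the defender moves toward $B$ to reach $D'$, while $A'$ denotes the intruder's optimal response. By the Conservation Lemmas (Lemmas 5 and 6) the optimal breaching point is still $B$ and $p(\mf z_{D'},\mf z_{A'},\zb)=c$, so $A'$ lies on the new level curve $\mathcal{C}_c(\mf z_{D'})$, which remains smooth because $\phi_D>0$ persists until $D$ reaches $B$. By Remark 4 there is a circle of radius $\aspeed\,dt$ tangent to $\mathcal{C}_c(\mf z_{D'})$ at $A'$; by the normality of $\za-\zb$ it is centered at $A$, and for $dt$ small enough it does not cross $\mathcal{C}_c(\mf z_{D'})$ and so lies entirely in the region $\pp<c$. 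Any admissible intruder motion ends on or inside this circle, so the updated payoff is at most $c$, with equality only when the intruder also heads to $B$ (landing at the tangent point $A'$).

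The step I expect to be the main obstacle is making the level-set generalization genuinely rigorous rather than asserting it: I must verify that the substitution $x=\nu(\tau_D-c)$ leaves the smoothness computation of Lemma 8 intact (so the tangent-circle construction of Remark 4 still applies) and that the orientation of the payoff gradient is correctly identified, so the tangent circle truly falls on the $\pp<c$ side and not the $\pp>c$ side. Once the infinitesimal non-increase is established on every level set, the finite-time statement follows by integration: $\pp$ is continuous in the states and non-increasing on each infinitesimal step regardless of the intruder's control, hence non-increasing along the whole trajectory, and constant precisely when the intruder mirrors the defender toward the optimal breaching point. This delivers $\pdstar\leq\pdastar$, the left inequality of the Nash condition \eqref{eq:nash}.
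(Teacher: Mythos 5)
Your proposal is correct and follows the same overall strategy as the paper: extend the barrier argument of Lemma 10 to arbitrary level sets of the payoff and then replay the tangent-circle construction there. The difference lies in how the level sets are justified. The paper builds them geometrically as offset curves of $\mathcal{C}(\zd)$ --- extending the normal lines from $\mathcal{C}$ and connecting points at equal distance --- and then identifies these with the payoff level sets by noting that two intruders on the same offset curve take the same minimal time to reach $\mathcal{C}$; it asserts smoothness because the offset curve ``has the same shape as $\mathcal{C}$.'' You instead define the level set directly as $\{\za \mid \pp = c\}$ and observe that the proofs of Lemmas 3, 8, and 9 go through verbatim with $x = \nu(\tau_D - c)$ in place of $x = \nu\tau_D$. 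Your route is arguably the more careful one: offset curves of a smooth curve are not automatically smooth (they can develop cusps once the offset distance reaches the radius of curvature), so the paper's ``same shape'' claim is the weakest link in its argument, whereas your direct substitution reuses the curvature computation of Lemma 8 without that detour. You correctly flag your own remaining obligations --- verifying that the substitution preserves the smoothness computation and that the payoff gradient points to the $\pp<c$ side --- and these are exactly the points the paper also leaves at the level of assertion, so your proof is at least as rigorous as the original. Your closing integration step, passing from the infinitesimal non-increase on each level set to monotonicity along the whole trajectory, is also stated more explicitly than in the paper, which leaves it implicit.
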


\begin{proof}
By Lemma 3, the curve $\mathcal{C}=\{\mf z_A\mid p(\mf z_D, \mf z_A,\zb)=0\}$ is a simple closed curve and by Lemma 10, if intruder starts the game on $\mathcal{C}$, defender can play optimally so that $p$ does not increase. Consider level sets of $\mathcal{C}$, as shown in Fig.~\ref{fig:levelset}(b). These level sets are constructed by extending the normal lines from breaching points to $\mathcal{C}$ and connecting the points that are the same distance from $\mathcal{C}$.

Consider distinct initial intruder positions $A$ and $A'$, both on the same level set close to $\mathcal{C}$. Then $p(\mf z_D, \mf z_A,\zb)= p(\mf z_D, \mf z_{A'},\zb)$ because both intruders take the same minimal time to reach $\mathcal{C}$ where $p(\mf z_D, \mf z_A, \zb)=0$. Therefore, all the level sets can be described as a curve $\mathcal{C'}=\{\mf z_A\mid p(\mf z_D, \mf z_A,\zb)=k\}$.

Then, the same logic used to prove Lemma 10 applies. The curve $\mathcal{C'}$ is smooth since it has the same shape as $\mathcal{C}$, so Lemma 8 and Remark 4 are valid. The only difference is that $A''$ does not lie under the defender-winning region but lies under regions outside $\mathcal{C'}$, which would be on some outer level set with lower $p$. In this way, $\pp$ decreases or stays the same if intruder continues to move towards the optimal breaching point.
\end{proof}

\begin{remark}[Defender strategy] 
Optimal defender strategy is to move towards the optimal breaching point at its maximum speed at any time when $\phi_D>0$.
\end{remark}

Now that the left inequality of \eqref{eq:nash} is proven, we tackle to prove that $\pdastar\leq\pastar$.

\begin{lemma} 
Given $\mf z_D$ and $\mf z_A$, if intruder continues to move towards the optimal breaching point, $\pp$ increases or stays the same regardless of defender's behavior.
\end{lemma}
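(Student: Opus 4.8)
The plan is to mirror the defender-side argument (the limiting-case and generalization lemmas) but with every inequality reversed, the key new ingredient being that the payoff is the value attained at the breaching point that the intruder \emph{maximizes} over. First I would fix the current configuration $\zd,\za$ with optimal breaching point $\zb$, and let the intruder follow its candidate strategy for an infinitesimal time $dt$. Moving along the straight segment toward $\zb$ at speed $\nu$, the intruder covers distance $\nu\,dt$, so its straight-line target time drops by exactly $dt$; writing $A'$ for the new intruder position, $\tau_A(\mf z_{A'},\zb)=\taa-dt$.

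The central step is to establish the asymmetry that, over the same interval, the defender can lower its own target time to $\zb$ by \emph{at most} $dt$, whatever it does. Since the defender travels on the hemisphere at unit speed, after $dt$ it reaches some $D''$ with geodesic distance $d(\zd,\mf z_{D''})\le dt$. As $\tau_D$ is exactly the great-circle distance to $\zb$ (see \eqref{eq:taud}), the geodesic triangle inequality gives
\bql
\tau_D(\mf z_{D''},\zb)=d(\mf z_{D''},\zb)\ge d(\zd,\zb)-dt=\tdd-dt,
\eql
with equality precisely when $D''$ lies on the geodesic from $\zd$ toward $\zb$, i.e. the matched optimal move covered by the conservation lemmas.

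I would then lower-bound the new payoff by evaluating the (now generally suboptimal) breaching point $\zb$ at the new configuration; because $p$ is the maximum over breaching points (Theorem 1), carrying the old $\zb$ forward only underestimates it:
\bql
p(\mf z_{D''},\mf z_{A'})\ge \tau_D(\mf z_{D''},\zb)-\tau_A(\mf z_{A'},\zb)\ge(\tdd-dt)-(\taa-dt)=\pp .
\eql
Thus $p$ does not decrease over $dt$ for any defender response, and is stationary exactly when the defender also moves optimally toward $\zb$; integrating this infinitesimal monotonicity over $[t_0,t_f]$ gives the claim. Note that, unlike the defender-side proof, this argument invokes only the geodesic triangle inequality and the maximizing definition of the breaching point --- it needs neither smoothness of $\mathcal C$ nor $\phi_D>0$, so it also covers the degenerate case.

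The main obstacle I expect is justifying the lower-bound step with full rigor: one must argue that substituting the \emph{old} optimal breaching point $\zb$ into the new configuration is legitimate (it is, since the intruder selects the payoff-maximizing breaching point, so any fixed choice is a lower bound), and that the two $dt$ contributions cancel exactly --- the intruder reclaims a full $dt$ while the defender can shave off at most $dt$. Combined with the already-proven left inequality, this closes the Nash equilibrium \eqref{eq:nash} and confirms that steering toward the optimal breaching point is optimal for both agents.
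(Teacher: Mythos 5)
Your proof is correct, but it takes a genuinely different route from the paper's. The paper argues indirectly: it invokes Lemma 5 (payoff conservation under mutually optimal play) and Lemma 11 (the defender's candidate strategy makes $p$ non-increasing), and then asserts that a defender move which strictly decreased $p$ against the intruder's candidate strategy "would violate the optimal behavior of the defender." That step is essentially an appeal to the defender's optimality rather than a derivation of it, and as written it is somewhat circular --- Lemma 11 only shows the candidate defender strategy guarantees non-increase, not that no other defender response could do strictly better against this particular intruder strategy. Your argument closes exactly that gap with a direct two-line estimate: the intruder's target time to the \emph{frozen} breaching point $\zb$ drops by exactly $dt$, the defender's target time can drop by at most $dt$ by the geodesic triangle inequality on the hemisphere (which applies because $\tdd$ in \eqref{eq:taud} is the great-circle distance and the closed upper hemisphere is geodesically convex), and the new payoff is bounded below by its value at the old $\zb$ because the optimal breaching point is a maximizer. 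This buys you rigor, independence from the smoothness and $\phi_D>0$ machinery of Lemmas 8--10, and coverage of the degenerate case; the one hypothesis you inherit is that $\theta^*$ is a \emph{global} maximizer of $p^\theta$, which Theorem 1 asserts but only verifies to first order, so your lower-bound step $p(\mf z_{D''},\mf z_{A'},\cdot)\ge \tau_D(\mf z_{D''},\zb)-\tau_A(\mf z_{A'},\zb)$ rests on that claim being true globally (or at least that the running payoff is defined as the supremum over breaching points). Also, your equality characterization ("stationary exactly when the defender moves toward $\zb$") needs the conservation-of-breaching-point Lemma 6 to conclude the maximum is re-attained at $\zb$, but that does not affect the inequality the lemma actually asserts.
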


\begin{proof}
We know that $p$ will not decrease if intruder continues to move towards the optimal breaching point at its maximum speed because from Lemma 11, optimal defender strategy is to move towards the optimal breaching point as well and $p$ stays the same if both defender and intruder move towards the optimal breaching point at their maximum speeds by Lemma 5. If we assume that an arbitrary defender movement decreases $p$ when the intruder continue to move towards the optimal breaching point at its maximum speed, then it will violate the optimal behavior of the defender because the arbitrary move results in lower $p$. Furthermore, $p$ can increase if the defender randomly moves and delays capturing the intruder (i.e. $\tau_D$ indefinitely increases). 
\end{proof}

\begin{remark}[Intruder strategy] 
Optimal intruder strategy is to move towards the optimal breaching point at its maximum speed at any time.
\end{remark}

\section{Simulation}
We run simulations to demonstrate the optimality of the game. We prepare three different setups: (i) both defender and intruder follow their optimal strategies $\dbest$ and $\abest$; (ii) only defender follows $\dbest$; and (iii) only intruder follows $\abest$. The experiments are run with an initial configuration $\mf z=[\psi,\phi,r]=[0.9,0.3\pi,2]$, and intruder maximum speed $\nu=0.8$. Fig.~\ref{fig:traj} shows the simulation results for the three setups. The top plots show $\td$, $\ta$, and $\p$ for each condition, and the bottom figures show corresponding defender and intruder trajectories.

In Fig.~\ref{fig:traj}(a), we observe that the payoff $\pdastar$ remains the same, and this is expected since all the players are following their optimal strategies by moving towards the optimal breaching point in the shortest distance. Fig.~\ref{fig:traj}(b) shows that payoff $\pdstar$ is non-increasing since the defender always moves towards the optimal breaching point while the intruder moves in an arbitrary direction. In this case, the game ends with defender's win. Fig.~\ref{fig:traj}(c) displays the payoff $\pastar$ is non-decreasing. The intruder successfully enters the perimeter regardless of defender's behavior. 

The simulation results demonstrate that the optimality of the game is given as a Nash equilibrium: $\pdstar\leq\pdastar\leq\pastar$.

\begin{figure*}[!t]
\centering
\includegraphics[height=4.5cm]{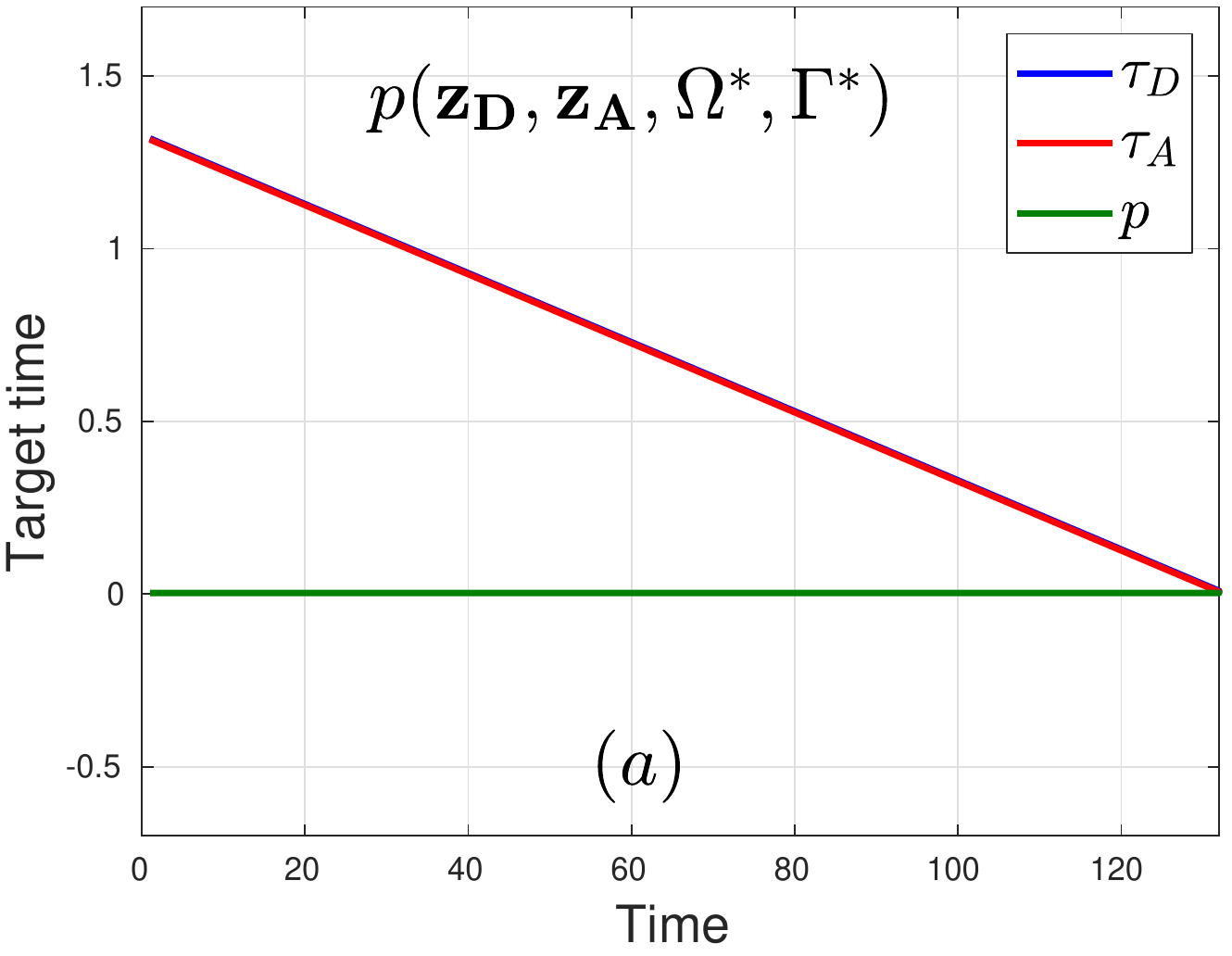}
\includegraphics[height=4.5cm]{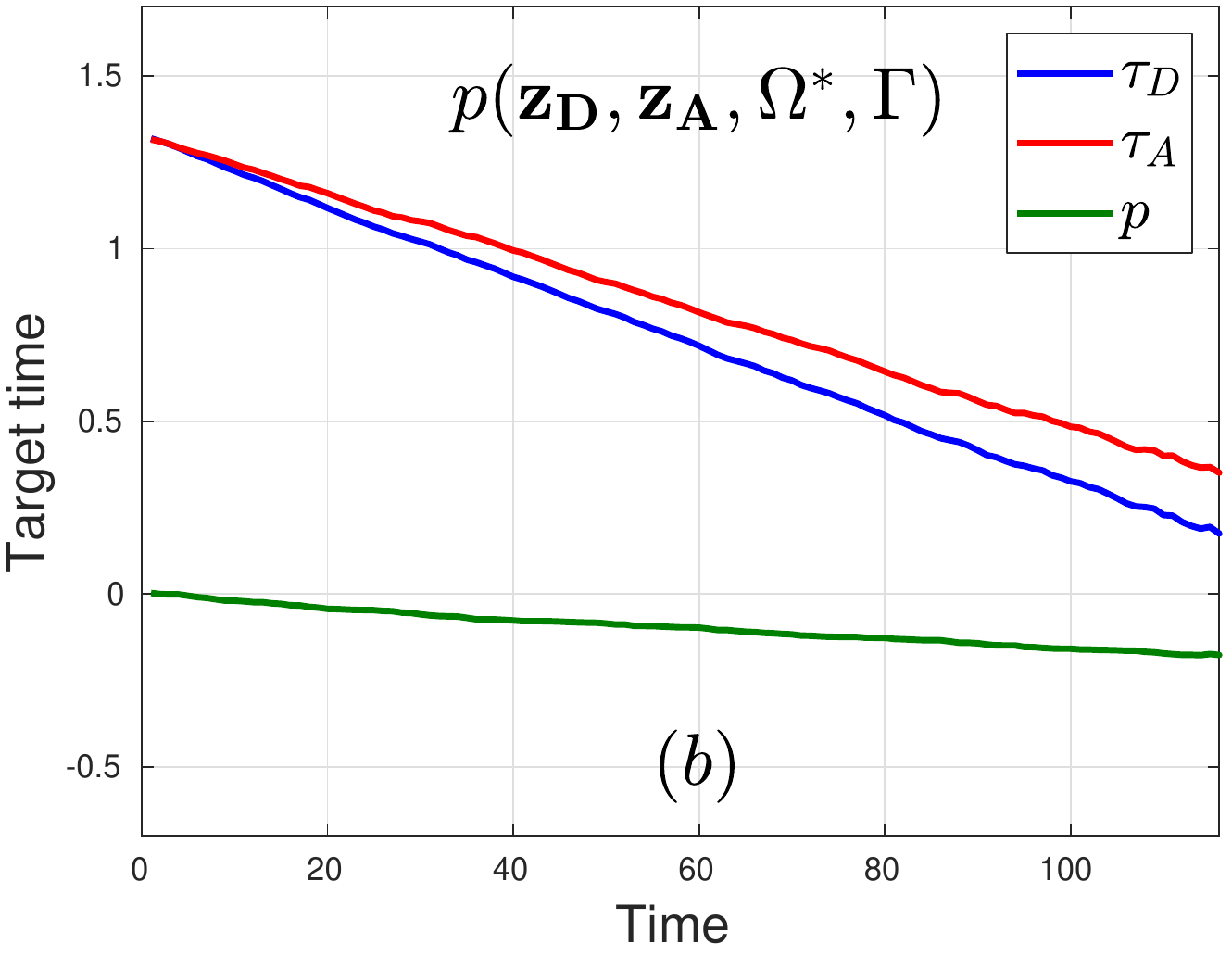}
\includegraphics[height=4.5cm]{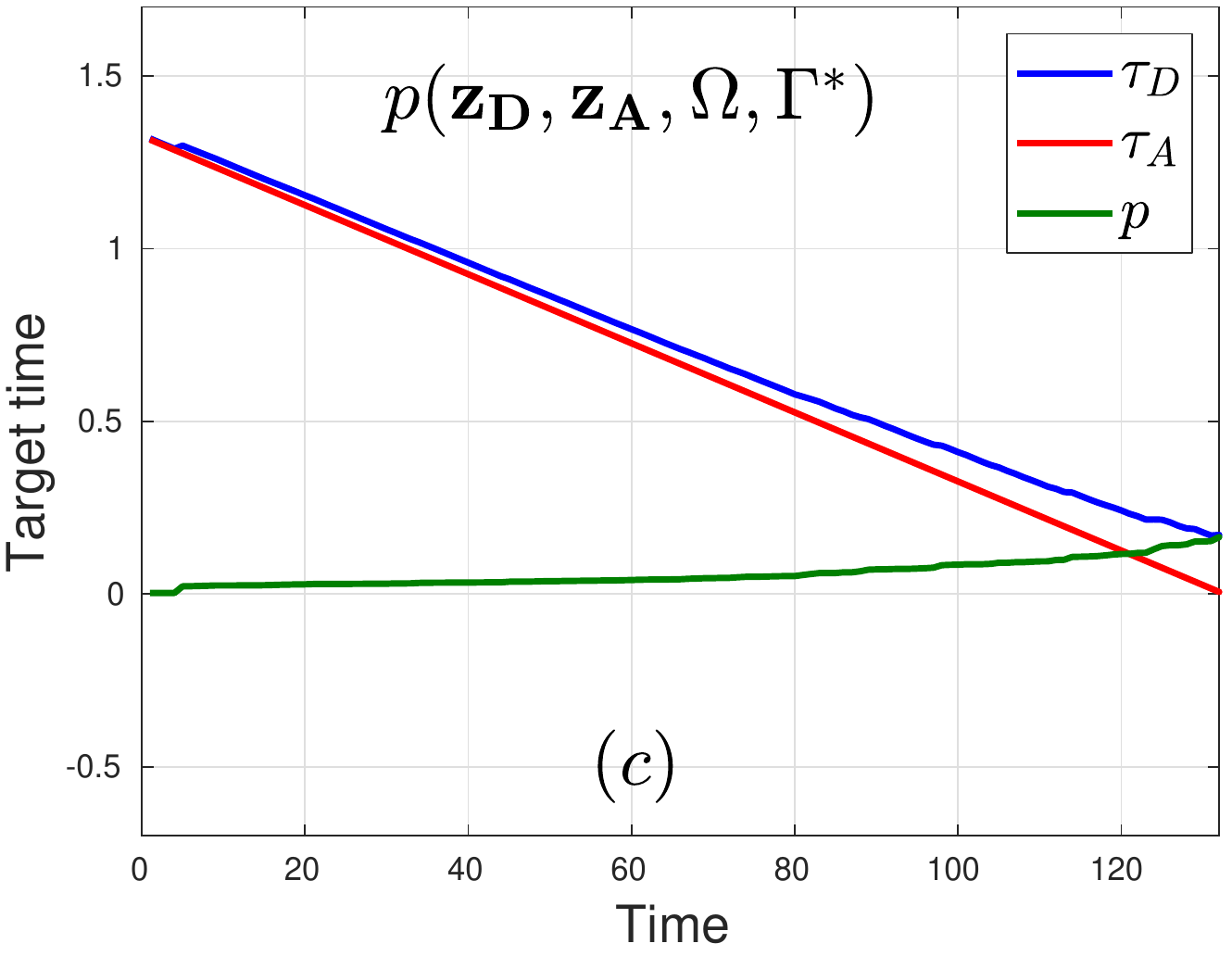}

\includegraphics[height=5.6cm]{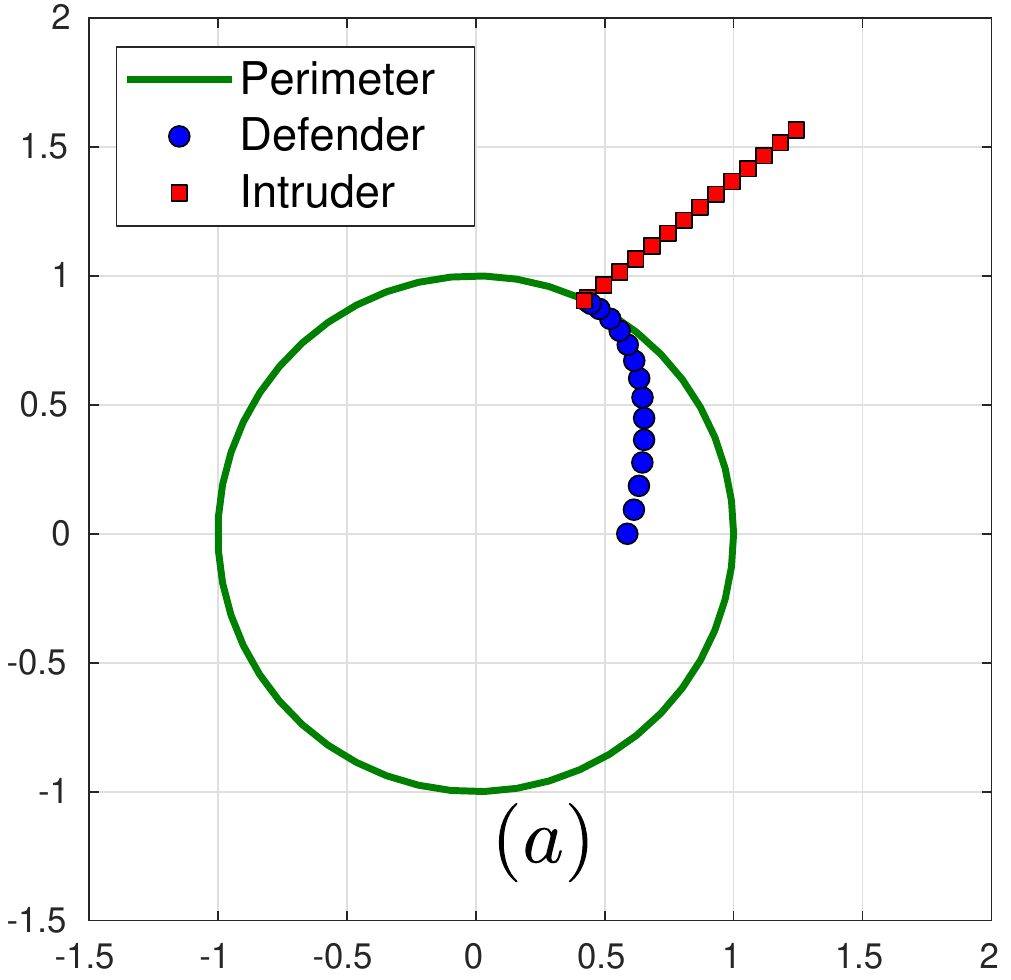}
\includegraphics[height=5.6cm]{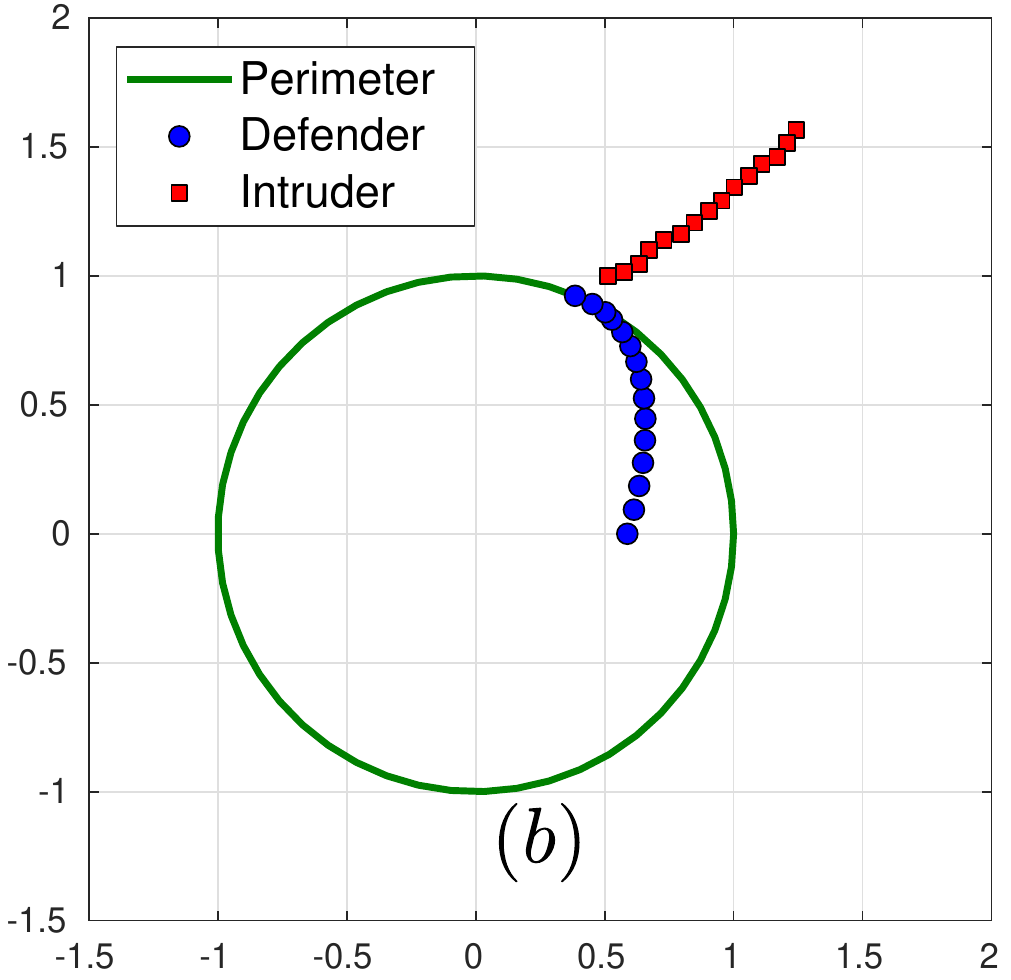}
\includegraphics[height=5.6cm]{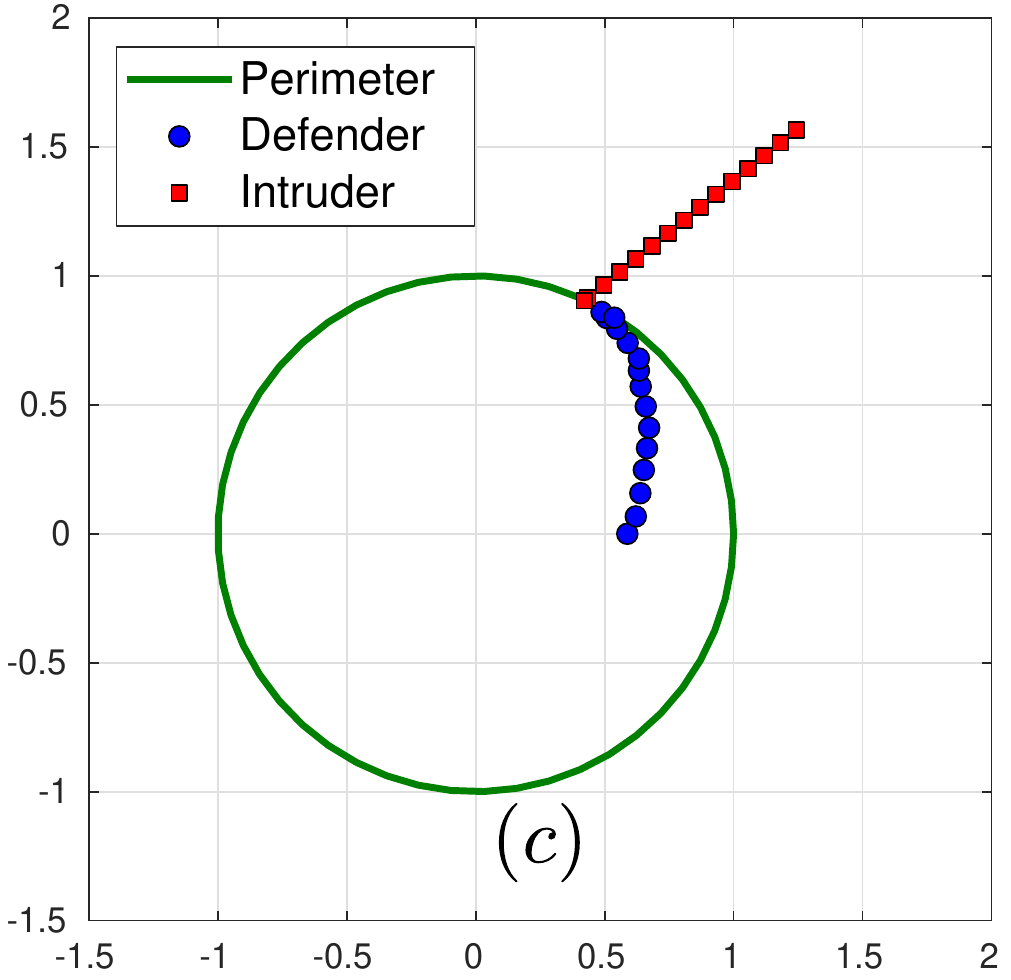}

\caption{
Simulation results for hemisphere defense game. The top figures display payoff and target time, and the bottom shows the corresponding trajectories of defender and intruder when (a) both both defender and intruder execute their optimal strategies, (b) only defender behaves optimally, and (c) only intruder follows its optimal strategy.}
\label{fig:traj}
\end{figure*}

\section{Conclusion \label{sec:conclusion}}
This paper discusses an approach to solve the perimeter-defense game on a hemisphere. To solve for the optimal strategies, we first propose an objective function called payoff and introduce candidate optimal strategies. We take a geometric approach to characterize the barrier that divides the defender and intruder winning regions. For the optimality proof, we aim to prove both sides of inequalities from \eqref{eq:nash} and we confirm that the strategies $\Omega^*$ and $\Gamma^*$ for both defender and intruder are to move towards the optimal breaching point at their maximum speeds at any time. The simulation verifies the optimality of the game as a Nash equilibrium. The future work will aim to solve the perimeter-defense problem between aerial defender and aerial intruder.






%
%
%


\bibliography{main}
\bibliographystyle{unsrt}

\end{document}